\newtheorem{thm}{Theorem}
\newtheorem{lemma}[thm]{Lemma}
\newcommand{\bs}[1]{\boldsymbol{#1}}
\newcommand{\inv}{^{\raisebox{.2ex}{$\scriptscriptstyle-1$}}}
\title{IDF++: Analyzing and Improving Integer \\ Discrete Flows for Lossless Compression}
 \author{Rianne van den Berg, Alexey A. Gritsenko, Mostafa Dehghani,\\ \textbf{Casper Kaae S{\o}nderby \& Tim Salimans}\\
Google Research\\
\texttt{\{riannevdberg,agritsenko,dehghani,casperkaae,salimans\}@google.com}
}
\begin{document}

\maketitle

\begin{abstract}
In this paper we analyse and improve integer discrete flows for lossless compression. Integer discrete flows are a recently proposed class of models that learn invertible transformations for integer-valued random variables. Their discrete nature makes them particularly suitable for lossless compression with entropy coding schemes. We start by investigating a recent theoretical claim that states that invertible flows for discrete random variables are less flexible than their continuous counterparts. We demonstrate with a proof that this claim does not hold for integer discrete flows due to the embedding of data with finite support into the countably infinite integer lattice. Furthermore, we zoom in on the effect of gradient bias due to the straight-through estimator in integer discrete flows, and demonstrate that its influence is highly dependent on architecture choices and less prominent than previously thought. 
Finally, we show how different architecture modifications improve the performance of this model class for lossless compression, and that they also enable more efficient compression: a model with half the number of flow layers performs on par with or better than the original integer discrete flow model. 
\end{abstract}

\section{Introduction}
\label{sec:introduction}
Density estimation algorithms that minimize the cross entropy between a data distribution and a model distribution can be interpreted as lossless compression algorithms because the cross-entropy upper bounds the data entropy. While autoregressive neural networks \citep{uria2014deep, theis2015generative, oord2016pixel, salimans2017pixelcnn++} and variational auto-encoders \citep{kingma2013auto, rezende2015variational} have seen practical connections to lossless compression for some time, normalizing flows were only recently used for lossless compression.
Most normalizing flow models are designed for real-valued data, which complicates an efficient connection with entropy coders for lossless compression since entropy coders require discretized data. However, normalizing flows for real-valued data were recently connected to bits-back coding by \citet{ho2019local}, opening up the possibility for efficient dataset compression with high compression rates. Orthogonal to this, \citet{tran2019discreteflows} and \citet{hoogeboom2019idf} introduced normalizing flows for discrete random variables. \citet{hoogeboom2019idf} demonstrated that integer discrete flows can be connected directly to entropy coders without the need for bits-back coding. 

In this paper we aim to improve integer discrete flows for lossless compression. Recent literature has proposed several hypotheses on the weaknesses of this model class, which we investigate as potential directions for improving compression performance. More specifically, we start by discussing the claim on the flexibility of normalizing flows for discrete random variables by \citet{papamakarios2019normalizing}, and we show that this limitation on flexibility does not apply to integer discrete flows. We then continue by discussing the potential influence of gradient bias on the training of integer discrete flows. We demonstrate that other less-biased gradient estimators do not improve final results. Furthermore, through a numerical analysis on a toy example we show that the straight-through gradient estimates for 8-bit data correlate well with finite difference estimates of the gradient.
We also demonstrate that the previously observed performance degradation as a function of number of flows is highly dependent on the architecture of the coupling layers. Motivated by this last finding, we introduce several architecture changes that improve the performance of this model class on lossless image compression.

\section{Related work}

\textbf{Continuous Generative Models:} Continuous generative flow-based models \citep{chen2001gauss, dinh2014nice, dinh2016density} are attractive due to their tractable likelihood computation. Recently these models have demonstrated promising performance in modeling images \citep{ho2019flowpp, kingma2018glow}, audio \citep{kim2018flowavenet}, and video \citep{kumar2019videoflow}. We refer to \citet{papamakarios2019normalizing} for a recent comprehensive review of the field.

By discretizing the continuous latent vectors of variational auto-encoders and flow-based models, efficient lossless compression can be achieved using bits-back coding \citep{hinton1993keeping}. Recent examples of such approaches are Local Bits-Back Coding with normalizing flows \citep{ho2019local} and variational auto-encoders with bits-back coding such as Bits-Back with ANS \citep{townsend2019practical}, Bit-Swap \citep{kingma2019bit} and HiLLoC \citep{townsend2019hilloc}. These methods achieve good performance when compressing a full dataset, such as the ImageNet test set, since the auxiliary bits needed for bits-back coding can be amortized across many samples. However, encoding a single image would require more bits than the original image itself \citep{ho2019local}. 

\textbf{Learned discrete lossless compression:} Producing discrete codes allows entropy coders to be directly applied to single data instances. 
\citet{mentzer2019practical} encode an image into a set of discrete multiscale latent vectors that can be stored efficiently. 
Fully autoregressive generative models condition unseen pixels directly on the previously observed pixel values and have achieved the best likelihood values compared to other models \citep{oord2016pixel, salimans2017pixelcnn++}. However, decoding with these models is impractically slow since the conditional distribution for each pixel has to be computed sequentially.
Recently, super-resolution networks were used for lossless compression \citep{cao2020lossless} by storing a low resolution image in raw format and by encoding the corrections needed for lossless up-sampling to the full image resolution with a partial autoregressive model. Finally, \citet{mentzer2020learning} first encode an image using an efficient lossy compression algorithm and store the residual using a generative model conditioned on the lossy image encoding.

\textbf{Hand-designed Lossless Compression Codecs: }
The popular PNG algorithm \citep{boutell1997png} leverages a simple autoregressive model and the DEFLATE algorithm \citep{deutsch1996deflate} for compression. WebP \citep{webp} uses larger patches for conditional compression coupled with a custom entropy coder. In its lossless mode, JPEG~2000 \citep{rabbani2002jpeg2000} transforms an image using wavelet transforms at multiple scales before encoding. Lastly, FLIF \citep{sneyers2016flif} uses an adaptive entropy coder that selects the local context model using a per-image learned decision tree.

\section{Background: normalizing flows}
\label{sec:normalizing_flows}
In this section we briefly review normalizing flows for real-valued and discrete random variables.
A normalizing flow consists of a sequence of invertible functions applied to a random variable $\bs x$: $f^K \circ f^{K-1} \circ ... \circ f^{1} (\bs x)$, yielding random variables $\bs y^{K}\leftarrow ... \leftarrow \bs y^{1} \leftarrow \bs y^{0} = \bs x$. 
First, consider a real-valued random variable $\bs x\in \mathbb R^d$ with unknown distribution $p_{\bs x}(\bs x)$. Let $f : \mathbb R^d \mapsto \mathbb R^d$ be an invertible function that such that $\bs y = f(\bs x)$ with $\bs y \in \mathbb R^d$. 
If we impose a density $p_{\bs y}(\bs y)$ on $\bs y$, the distribution $p_{\bs x}(\bs x)$ is obtained by marginalizing out $\bs y$ from the joint distribution $p_{\bs x, \bs y}(\bs x, \bs y) = p_{\bs x|\bs y}(\bs x|\bs y) p_{\bs y}(\bs y)$:
\begin{align}
    p_{\bs x}(\bs x) \!= \!\!\int  \!\!\delta(\bs x \!-\! f\inv(\bs y)) p_{\bs y}(\bs y)\mathrm d\bs y
    =\!\!\int \!\!\delta(\bs x\! - \!\bs u) p_{\bs y}(f(\bs u)) \bigg \vert \! \det\!  \frac{\partial f(\bs u) }{\partial \bs u}\bigg \vert \mathrm d\bs u
    = p_{\bs y} (f(\bs x)) \bigg \vert\! \det\!  \frac{\partial f(\bs x) }{\partial \bs x}\bigg \vert ,
    \label{eq:change_of_variables}
\end{align}
where we used $ p_{\bs x|\bs y}(\bs x|\bs y) = \delta(\bs x - f^{-1}(\bs y))$, with $\delta(x-x')$ the Dirac delta distribution, and we applied a change of variables.
Repeated application of \eqref{eq:change_of_variables} for a sequence of transformations then yields the log-probability:
\begin{align}
    \ln p_{\bs x} (\bs x) =  \ln p_{\bs y^K} (\bs y^K) + \sum_{k=1}^K \ln \bigg\vert \det  \frac{\partial \bs y^{k} }{\partial \bs y^{k-1}}\bigg \vert \; .
\end{align}
By parameterizing the invertible functions with invertible neural networks and by choosing a tractable distribution $p_{\bs y^K}(\bs y^K)$ these models can be used to optimize the log-likelihood of $\bs x$. 
When modeling discrete data with continuous flow models, dequantization noise must be added to the input data to ensure that a lower bound to the discrete log-likelihood is optimized \citep{uria2013rnade, theis2015note, ho2019flowpp}. 

\subsection{Discrete normalizing flows}
Next, consider $\bs x$ to be a discrete random variable with domain $\mathcal X$, and define the invertible function $f: \mathcal X \mapsto \mathcal X$. In general $f$ can be a mapping between two different domains, but for our discussions it will be sufficient to consider a single domain. 
The marginal probability mass of $\bs x$ is given by
\begin{align}
    p_{\bs x}(\bs x) = \sum_{\bs y \in \mathcal X} p_{\bs x|\bs y}(\bs x|\bs y) p_{\bs y}(\bs y) = \sum_{\bs y \in \mathcal X} \delta_{\bs x, f^{-1}(\bs y)} p_{\bs y}(\bs y) = p_{\bs y} (f(\bs x)) \; ,
    \label{eq:change_of_variables_discrete}
\end{align}
with the Kronecker delta function $\delta_{i,j} = 1$ if $i=j$ and $0$ otherwise.
Note the absence of a volume correction in the form of a Jacobian determinant owing to the fact that probability mass functions only have support on a discrete set of points.

Recently, \citet{tran2019discreteflows} and \citet{hoogeboom2019idf} have both considered normalizing flows for discrete random variables. 
In integer discrete flows (IDF) by \citet{hoogeboom2019idf} the random variables are assumed to be integers, i.e. $\mathcal X = \mathbb Z^d$. 
The main building block of IDF is an additive bipartite coupling layer \citep{dinh2016density}:
\begin{align}
    \begin{bmatrix}
    \bs y_a \\
    \bs y_b 
    \end{bmatrix}
    = 
    \begin{bmatrix}
    \bs x_a \\
    \bs x_b + \lfloor \bs t_{\bs \theta}(\bs x_a) \rceil 
    \end{bmatrix} \;.
    \label{eq:idf_bijector}
\end{align}
Here $\bs y_a \in \mathbb Z^m$, $\bs y_b \in \mathbb Z^n$ are obtained by splitting $\bs y \in \mathbb Z^d$ into two pieces such that $m + n = d$, and similarly for $\bs x_a$ and $\bs x_b$. 
The pre-quantized translation $\bs t_{\bs \theta}(\cdot)$ is represented by the output of a neural network with learnable parameters $\bs\theta$, and is rounded to integer values with the rounding operator $\lfloor \cdot \rceil$.
The parameters $\bs\theta$ are optimized with a straight-through estimator and a gradient-based optimizer. 

\citet{tran2019discreteflows} introduce flows for non-ordinal discrete random values with a finite number of possible values: $\bs x \in \mathcal X = \{0, 1, ..., K-1\}^d$. They introduce an autoregressive and a coupling bijector, with the coupling layer given by $[\bs y_a ,\bs y_b] = [\bs x_a, \left( \bs s_{\theta}(\bs x_a)\circ \bs x_b + \bs t_{\theta}(\bs x_a) \right) \mathrm{mod} K]$, with $\circ$ denoting element-wise multiplication and $\bs s \in\{1, 2 ..., K-1\}$ and $\bs t \in \{0, 1, ..., K-1\}$ and $\bs s$ and $K$ constrained to be co-prime. Gradients are again computed with a straight-through estimator.

One of the main differences between discrete flows \citep{tran2019discreteflows} and integer discrete flows \citep{hoogeboom2019idf} is that the former treats the random variable as having a finite number of classes while the latter considers a countably infinite number of classes. In the next section we will see that this difference plays an important role in the theoretical flexibility of invertible flows for discrete random variables. 

\section{Can integer discrete flows factorize any distribution?}
\label{sec:flex}
In this section we discuss the theoretical flexibility of normalizing flows for discrete random variables. In particular, we will focus on the ability to map from a distribution with dependencies across all dimensions to a distribution which is fully factorized (i.e. independent across all dimensions). As generative flow models require a tractable base distribution with efficient sampling, simple base distributions (e.g. Gaussian, logistic, categorical) which are independent across dimensions/subpixels are frequently used. \citet{papamakarios2019normalizing} state that invertible flows for discrete random variables cannot map all distributions to a factorized base distribution, and name this as a limitation as compared to flows for real-valued random variables. 
We will analyze this claim and show that this limitation can be overcome by embedding the discrete data into a space with a larger set of possible values. Since integer discrete flows embed the data into the integer lattice $\mathbb Z^d$, this model class naturally does not suffer from a limited factorization capacity. 

The starting point for this discussion lies in the observation by \citet{papamakarios2019normalizing} that invertible normalizing flows for discrete random variables can only permute probability masses in the probability tensor that represents the probability distribution of the random variable. 
In other words, if we have an invertible function $f: \mathcal X \mapsto \mathcal X$, then there is always exactly one $\bs y$, such that $\bs y = f (\bs x)$ and $p_{\bs x} (\bs x) = p_{\bs y} (\bs y)$. 
In contrast, non-volume preserving normalizing flows for real-valued random variables can increase or decrease the density through the Jacobian determinant in the change of variables formula in \eqref{eq:change_of_variables}.

\citet{papamakarios2019normalizing} then discuss an educative example to show that this permutation property can lead to a limited ability to factorize distributions.
Consider the case of a two-dimensional random variable $\bs{x} = (x_1, x_2)$, with $x_1, x_2 \in \{0,1\}$, and a data distribution given by
\begin{align}
    p_{\bs x}(x_1, x_2) : 
    \begin{blockarray}{c c c }
           x_{1}\backslash x_2  & 0 & 1 \\ 
        \begin{block}{c (c c) }
            0 & 0.1 & 0.3 \\
            1 & 0.2 & 0.4 \\
        \end{block}
    \end{blockarray}\;.
    \label{eq:examplepapamakarios}
\end{align}
In order to map this probability distribution to an independent base distribution, the corresponding probability matrix must be of rank one. In other words, given a sequence of functions $f^k: \mathcal X \mapsto \mathcal X$ with $\mathcal X = \{0,1\}^2$, the probability distribution of the random variable $ \bs y = \bs y^K = f^K \circ f^{K-1} \circ ... \circ f^{1} (\bs x)$ should be represented by a matrix with linearly dependent columns or rows. This allows for a factorization into an outer product of two vectors that represent the independent base distributions for $y_1$ and $y_2$. 
Since discrete flows can only permute probability mass tensors, the matrix corresponding to $p_{\bs y}(y_1, y_2)$ must be a permutation of the matrix in \eqref{eq:examplepapamakarios}. However, there is no permutation of the elements in \eqref{eq:examplepapamakarios} that results in a rank-one matrix. Therefore, \citet{papamakarios2019normalizing} conclude that discrete normalizing flows cannot map any distribution to a factorized base distribution.
\begin{figure}[t]
\vspace{-10pt}
    \includegraphics[width=0.25\textwidth]{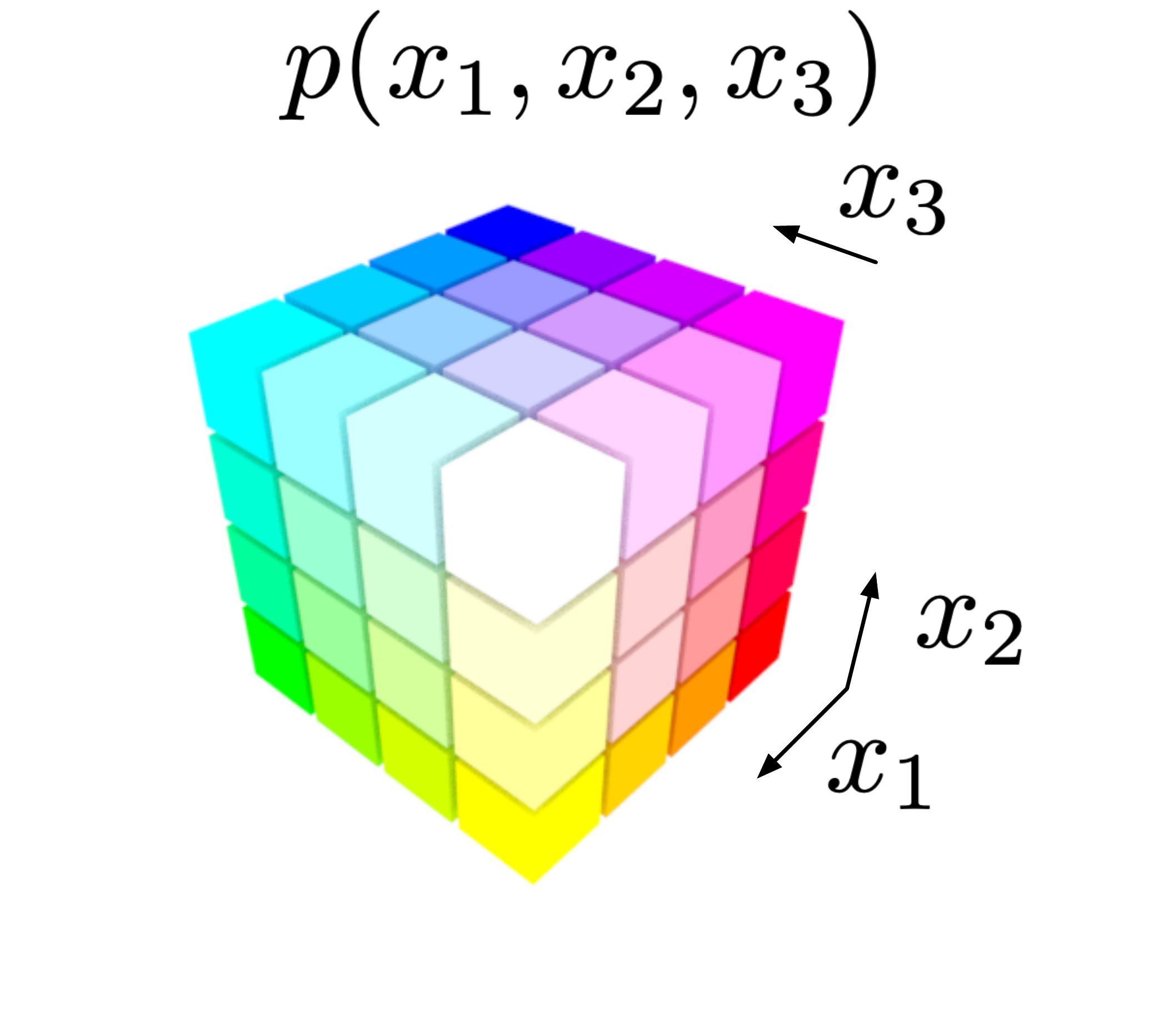} \hspace{-0.1cm}
    \includegraphics[width=0.25\textwidth]{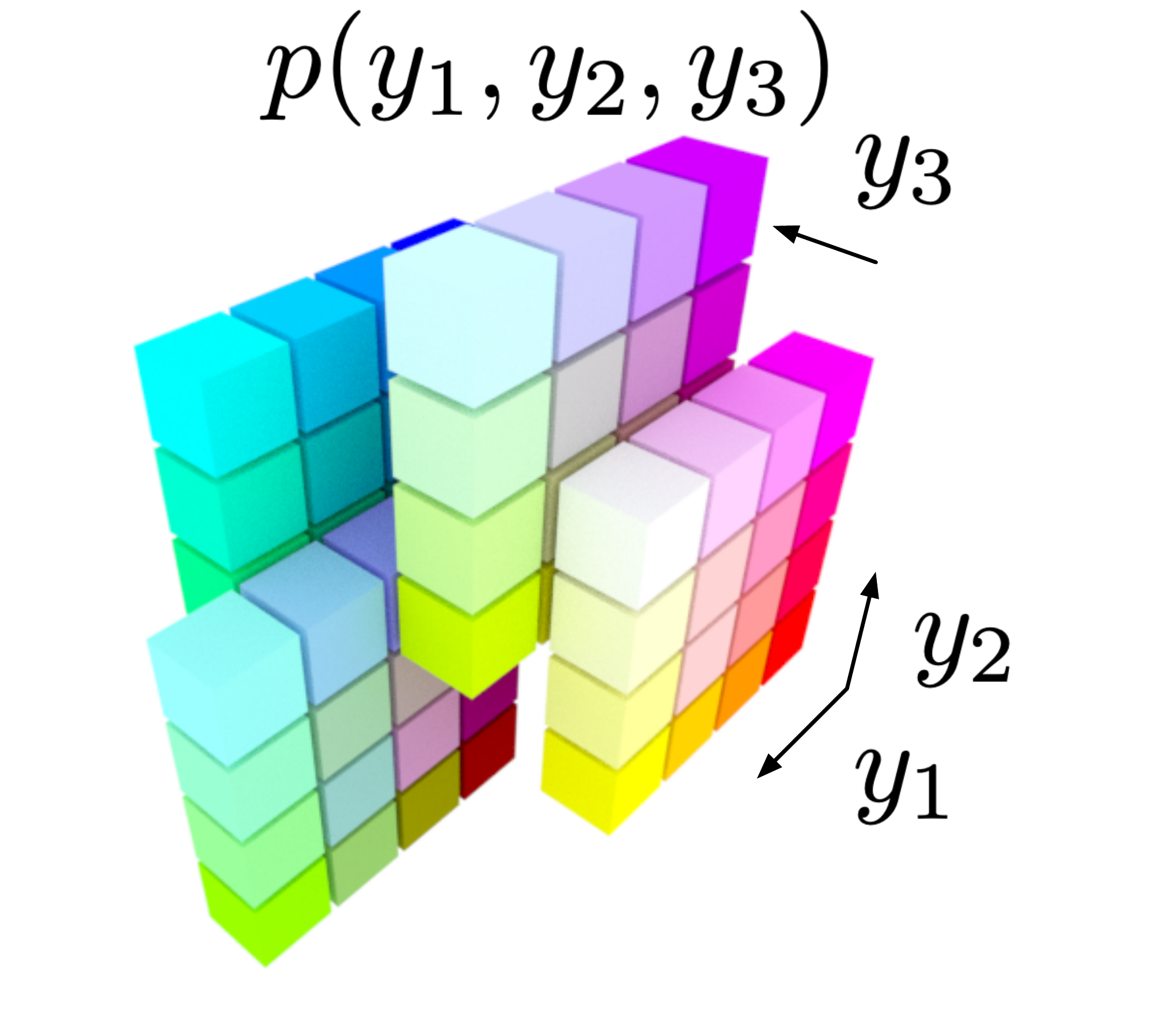}\hspace{-0.1cm}
    \includegraphics[width=0.25\textwidth]{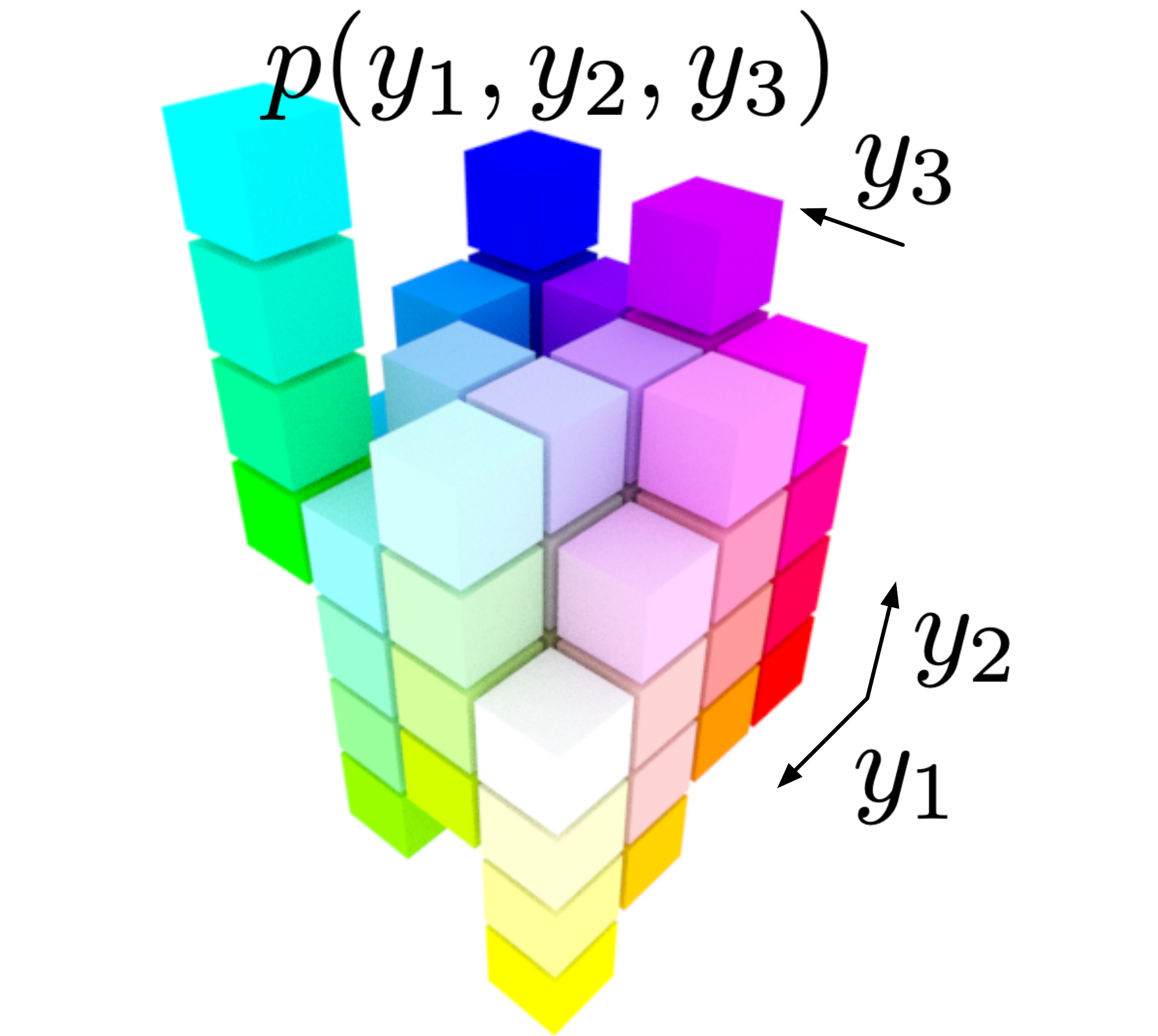}\hspace{-0.1cm}
    \includegraphics[width=0.25\textwidth]{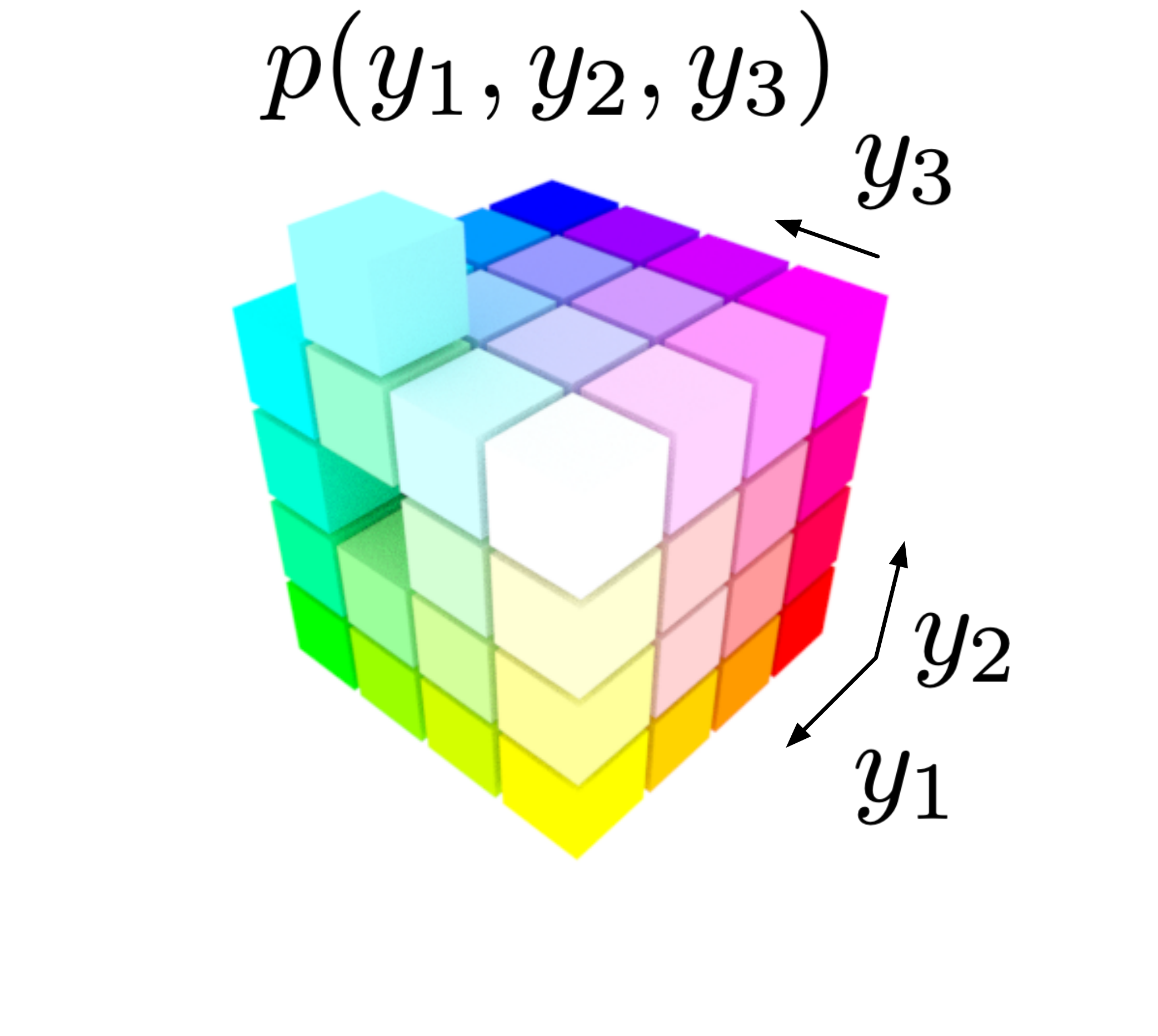}
    \vspace{-5pt}
    \caption{Left: 3D probability distribution tensor, only nonzero values are indicated with colored cubes, all empty space is assumed to be filled with zero-valued cubes. Middle left: an example of an additive transformation conditioned on $x_3$: $y_1 = x_1 + \lfloor t_1(x_3)\rceil$, $y_2 = x_2 + \lfloor t_2(x_3)\rceil$, $y_3 = x_3$. Middle right: an example of an additive transformation conditioned on $x_1$ and $x_3$:  $y_1 = x_1$, $y_2 = x_2 + \lfloor t_2(x_1, x_3)\rceil$, $y_3 = x_3$. Right: a distribution tensor that a single additive transformation of the form of \eqref{eq:idf_bijector} cannot generate from the cube on the left.}
    \label{fig:flatten_dist}
\vspace{-10pt}
\end{figure}

However, one of the key assumptions made above is that the domain of $f^k$ is restricted to $\mathcal X = \{0,1\}^2$. By extending $\mathcal X$ to a larger number of classes this example \textit{can} in fact be factorized by a discrete normalizing flow. More concretely, let us extend the domain to $\mathcal X = \{0, 1, 2, 3\}^2$. The probability distribution matrix of $\bs x$ is shown below, together with a permutation of rank 1.
\begin{align}
    p_{\bs x}(x_1, x_2)& : 
    \begin{pmatrix}
            0.1   & 0.3   & 0     & 0 \\
            0.2   & 0.4   & 0     & 0 \\
            0     & 0     & 0     & 0 \\
            0     & 0     & 0     & 0\\
    \end{pmatrix}
    \rightarrow p_{\bs y}(y_1, y_2) : 
    \begin{pmatrix}
        0.1   & 0   & 0     & 0 \\
        0.2   & 0   & 0     & 0 \\
        0.3   & 0   & 0     & 0 \\
        0.4   & 0   & 0     & 0\\
    \end{pmatrix}
    =
    \begin{pmatrix}
        0.1 \\
        0.2 \\
        0.3 \\
        0.4 \\
    \end{pmatrix}
    \otimes
    \begin{pmatrix}
        1 \\
        0 \\
        0 \\
        0 \\
    \end{pmatrix}
    \;.
    \label{eq:extend_domain}
\end{align}
This example illustrates that the number of classes that are considered valid for the discrete random variables plays a crucial role in the flexibility of discrete flows. Here, we claim that this holds more generally through the following two lemmas. First we show that by embedding the data into a space with more possible values than present in the data itself, one can always construct an invertible mapping to one-dimensional variable embedded in $d$ dimensions, and that the resulting variable has a distribution which is trivially factorized across all $d$ dimensions. 
\begin{lemma}
\label{lem:1}
A $d$-dimensional discrete random variable $\bs x = (x_1, \dots, x_d)^T$ with $d>1$ and $x_i \in \{ 0,\dots,K^{(i)}-1\}$, distributed according to an arbitrary distribution $p_{\bs x}(\bs x)$, can be transformed to a one-dimensional random variable $y \in \{0,\dots, \left(\prod_{i=1}^d K^{(i)}\right) -1\}$ 
with a number of classes that scales exponentially with the dimension $d$ 
through a bijective mapping $f$. Embedding $y$ in $d$ dimensions through padding with $d-1$ zeros, i.e. $\bs y = (y, 0, \dots, 0)^T$, the distribution over $\bs y$ is trivially factorized across all dimensions: $p_{\bs x}(\bs x) = p_{y}(f(\bs x)) \prod_{i=2}^d \delta_{y_i, 0}$ with $\delta_{i,j}$ the Kronecker delta distribution for all zero-padded dimensions. 
\end{lemma}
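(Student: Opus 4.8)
The plan is to construct the bijection $f$ explicitly as a mixed-radix (positional) encoding of the tuple $(x_1,\dots,x_d)$ into a single integer, and then verify that the three required properties hold: that $f$ is a well-defined bijection onto $\{0,\dots,(\prod_i K^{(i)})-1\}$, that the induced distribution on the zero-padded $\bs y$ equals the claimed product, and that this product is genuinely factorized across the $d$ coordinates.

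First I would define $f(x_1,\dots,x_d) = \sum_{i=1}^d x_i \prod_{j=1}^{i-1} K^{(j)}$ (with the empty product equal to $1$), i.e. the standard mixed-radix representation. The key elementary fact is that as $(x_1,\dots,x_d)$ ranges over $\prod_i \{0,\dots,K^{(i)}-1\}$, this map is a bijection onto $\{0,\dots,(\prod_i K^{(i)})-1\}$; this is a standard counting/division-algorithm argument (uniqueness of mixed-radix digits), so I would state it and cite it as routine rather than grinding through the induction on $d$. Since $f$ is a bijection between finite sets of equal cardinality, its inverse exists and is itself a valid discrete flow map of the kind discussed in Section~\ref{sec:flex}.

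Next I would invoke the discrete change-of-variables formula \eqref{eq:change_of_variables_discrete}: for the one-dimensional variable $y = f(\bs x)$ we immediately get $p_y(y) = p_{\bs x}(f^{-1}(y))$, equivalently $p_{\bs x}(\bs x) = p_y(f(\bs x))$. Padding with zeros, the joint law of $\bs y = (y,0,\dots,0)^T$ assigns mass $p_y(y)$ to the point $(y,0,\dots,0)$ and zero elsewhere, which is exactly $p_{\bs y}(\bs y) = p_y(y_1)\prod_{i=2}^d \delta_{y_i,0}$; combining with the previous display yields the claimed identity $p_{\bs x}(\bs x) = p_y(f(\bs x))\prod_{i=2}^d \delta_{y_i,0}$. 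Finally I would note that the right-hand side is manifestly a product of $d$ univariate factors — $p_y$ depending only on $y_1$ and each Kronecker delta depending only on $y_i$ — so the distribution over $\bs y$ is factorized across all dimensions, which is the assertion of the lemma.

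The only genuine subtlety — and the part I would be most careful about — is bookkeeping rather than mathematics: making sure the embedding target space is stated so that the map is legitimately "a bijective mapping" in the sense used by the paper (same domain and codomain type, namely discrete), and that the zero-padded construction lands in the product domain $\{0,\dots,(\prod_i K^{(i)})-1\}\times\{0\}^{d-1}$, so that calling it a discrete flow is consistent with the permutation-of-probability-mass picture from \citet{papamakarios2019normalizing}. I expect no real obstacle; the proof is essentially the observation that enlarging the alphabet enough lets a single permutation line up all the probability mass along one axis, generalizing the $4\times 4$ example in \eqref{eq:extend_domain}.
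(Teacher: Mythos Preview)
Your proposal is correct and matches the paper's own (first) proof essentially line for line: the paper defines the same mixed-radix map $y = x_1 + K^{(1)} x_2 + K^{(1)}K^{(2)} x_3 + \dots + \prod_{i=1}^{d-1} K^{(i)} x_d$, exhibits its inverse via iterated integer division, and then invokes the discrete change-of-variables formula together with zero padding. The only cosmetic difference is that you cite bijectivity of the mixed-radix encoding as routine whereas the paper writes out the inverse explicitly (and also supplies a second, inductive proof as an alternative).
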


The closed form of the mapping is $y = f(\bs x) = x_1 + K^{(1)} x_2 + K^{(1)}K^{(2)} x_3 +\dots + \prod_{i=1}^{d-1} K^{(i)} x_d$, such that $y \in \{0,\dots, \left(\prod_{i=1}^d K^{(i)}\right) -1\}$. See Appendix \ref{sec:appendix_flexibility} for the full proof. Intuitively, given enough classes, a factorized base distribution is obtained by ``flattening'' the hypercube that contains all nonzero entries in the data distribution tensor (see left panel Figure \ref{fig:flatten_dist}) into one dimension. The following lemma demonstrates that additive coupling layers as in \eqref{eq:idf_bijector} of integer discrete flows can model such an invertible mapping.
\begin{lemma}
\label{lem:2}
For a $d$-dimensional random variable $\bs x$, translation operations of the form $z_a = x_a$, $z_b = x_b + \lfloor t(x_a)\rceil$, with $a$ and $b$ denoting the indices of two splits of the variable $\bs x$, are sufficient to map $\bs x$ in an invertible manner to a random variable $\bs y = (y_1, 0, \dots, 0)^T$, which is a one-dimensional variable embedded in $d$ dimensions. 
\end{lemma}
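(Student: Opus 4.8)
The plan is to exhibit, explicitly, the invertible map promised by Lemma~\ref{lem:1} as a short composition of additive coupling layers of the IDF form \eqref{eq:idf_bijector}. Recall from Lemma~\ref{lem:1} that the target is the mixed-radix encoding $y = f(\bs x) = x_1 + \sum_{j=2}^d \big(\prod_{i=1}^{j-1} K^{(i)}\big) x_j$, followed by embedding $y$ in $d$ dimensions as $(y,0,\dots,0)^T$. I would build this in two stages: first \emph{accumulate} the mixed-radix value of $\bs x$ into the first coordinate, then \emph{zero out} the remaining $d-1$ coordinates.

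Stage one (accumulate): take the split $\bs x_a = (x_2,\dots,x_d)$, $\bs x_b = x_1$, and the translation $t(\bs x_a) = \sum_{j=2}^d \big(\prod_{i=1}^{j-1} K^{(i)}\big) x_j$. Since the $K^{(i)}$ are integers this $t$ is integer-valued, so the rounding operator acts trivially and the layer outputs $z_1 = f(\bs x)$ with $z_i = x_i$ for $i\ge 2$; here $z_1 \in \{0,\dots,(\prod_i K^{(i)})-1\}$ is exactly the mixed-radix code of $\bs x$. Stage two (zero out): the key observation is that $z_1$ alone now determines every original coordinate, $x_i = g_i(z_1)$ with $g_i$ the $i$-th digit-extraction map $g_i(z_1) = \big\lfloor z_1 / \prod_{\ell=1}^{i-1} K^{(\ell)} \big\rfloor \bmod K^{(i)}$. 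I then apply a second additive coupling layer to $\bs z$ with the complementary split (first coordinate as the conditioning block, coordinates $2,\dots,d$ as the translated block) and translations $t_i(z_1) = -g_i(z_1)$; again each $t_i$ is integer-valued so rounding is trivial, and the translated block becomes $z_i - g_i(z_1) = x_i - x_i = 0$ for $i=2,\dots,d$. The composite output is thus $\bs y = (f(\bs x),0,\dots,0)^T$, matching Lemma~\ref{lem:1}, and the whole map is invertible because any composition of additive coupling layers is invertible by construction (the inverse of $z_b = x_b + \lfloor t(x_a)\rceil$ is $x_b = z_b - \lfloor t(z_a)\rceil$) — no information is lost by the second layer ``discarding'' coordinates $2,\dots,d$ precisely because stage one had already duplicated that information into $z_1$.

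The only genuinely delicate point is the admissibility of the prescribed translation functions $t,t_i$: in IDF these are neural-network outputs, whereas here they involve floors and a modulo. This is not an obstacle because $\bs x$ has finite support, so $z_1$ ranges over a finite set and each $g_i$ is a function on a finite domain, hence exactly representable by a neural network via a universal-approximation (or lookup-table) argument. I would also note that the single accumulation layer can, if desired, be decomposed into $d-1$ layers each folding one coordinate $x_j$ into the first while conditioning only on $x_j$, and likewise the zeroing step into $d-1$ layers, so that the construction uses only the literal low-arity coupling-layer primitive of \eqref{eq:idf_bijector}, as illustrated in Figure~\ref{fig:flatten_dist}.
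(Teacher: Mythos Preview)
Your proof is correct and rests on the same two-stage idea as the paper's: first accumulate the mixed-radix code of $\bs x$ into a single coordinate via an additive coupling, then zero out the remaining coordinates by subtracting the appropriate digit extracted (via integer division) from that code. The only difference is organizational: the paper argues by induction on $d$, folding one coordinate at a time with $2(d-1)$ two-way couplings, whereas you give the direct two-layer construction up front and mention the pairwise decomposition as an afterthought.
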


The formal proof is an induction on $d$ and relies on 1) the fact that integer discrete flows embed the discrete random variables into the integer lattice  $\mathcal X = \mathbb Z^d$, which always has a sufficient number of classes (countably infinite), and 2) the observation that two translations are sufficient to map two dimensions into a single dimension with more classes, see Appendix \ref{sec:appendix_flexibility}. 
For a more intuitive illustration, Figure \ref{fig:flatten_dist} depicts the type of operations that a single additive bijector can and cannot perform on a three-dimensional probability tensor. In Figure \ref{fig:toy_example} we show that, although sensitive to initialization, an IDF can be trained to factorize the toy example of \eqref{eq:examplepapamakarios}. The first three panels correspond to the data distribution and the intermediate and factorized final distributions produced by the first and second coupling layer respectively. For architecture details see Appendix \ref{sec:appendix_toy_architecture}.

As pointed out by \citet{papamakarios2019normalizing}, a discrete flow can only map a uniform base distribution into a uniform data distribution due to its permutation property. Therefore, invertible discrete flows require one-dimensional learnable distributions in order to be able to model all data distributions. Note also that discrete flows as introduced by \citet{tran2019discreteflows} are designed for a finite number of classes in $\mathcal X$ and therefore \textit{cannot} factorize all distributions. Although not implemented in the original work, this model class can in principle also be extended to embed the data into a space with a larger finite number of possible values than present in the support of the data in order to help alleviate this issue. Since IDFs treat the random variable as integers (with a countably infinite number of classes) it does not have a limited factorization capacity. We therefore explore other directions for potential improvements.

\begin{figure}[t]
\vspace{-10pt}
    \includegraphics[trim=0 0 0 50,clip,width=0.50\textwidth]{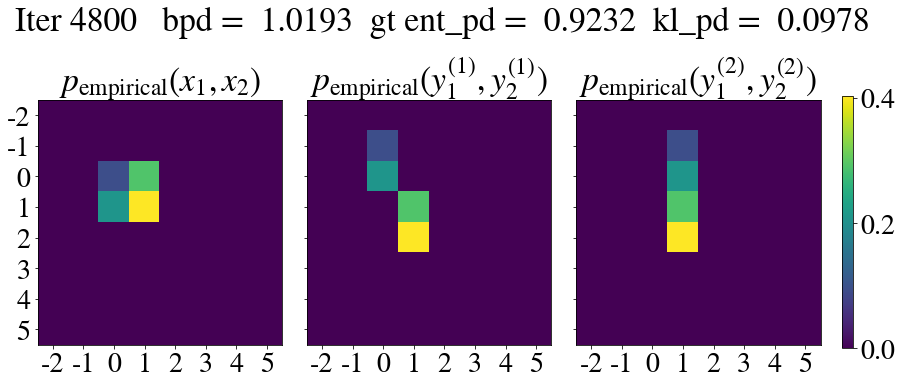} 
    \includegraphics[trim=0 0 0 50,clip,width=0.50\textwidth]{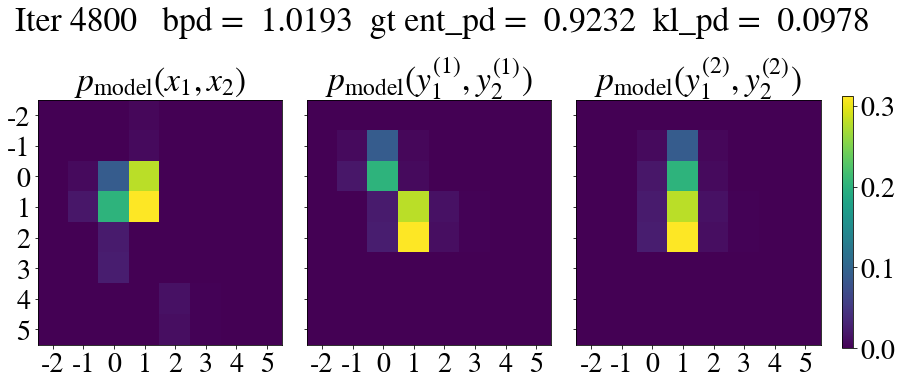}
    \vspace{-10pt}
    \caption{Visualization of an IDF that has learned to factorize the probability distribution of the toy example in \eqref{eq:examplepapamakarios}. Left: empirical densities of the data, the data transformed by one additive bijector, and the data transformed by two additive bijectors. Right: similar to the left plot, but with data sampled from the model.}
    \label{fig:toy_example}
    \vspace{-10pt}
\end{figure}

\section{Does gradient bias hinder optimization of IDFs?}
\label{sec:quantization}

In this section we investigate whether gradient bias is a problem when training IDF models.
\citet{hoogeboom2019idf} demonstrated that the performance of IDFs can deteriorate when the number of coupling layers is too large, and that this does not occur in the continuous additive counterpart without rounding operators or straight-through gradient estimators. The gradient bias induced by the straight-through estimator was suggested to be the cause. 
\begin{figure}[t]
\vspace{-20pt}
    \begin{center}
        \includegraphics[width=0.40\columnwidth]{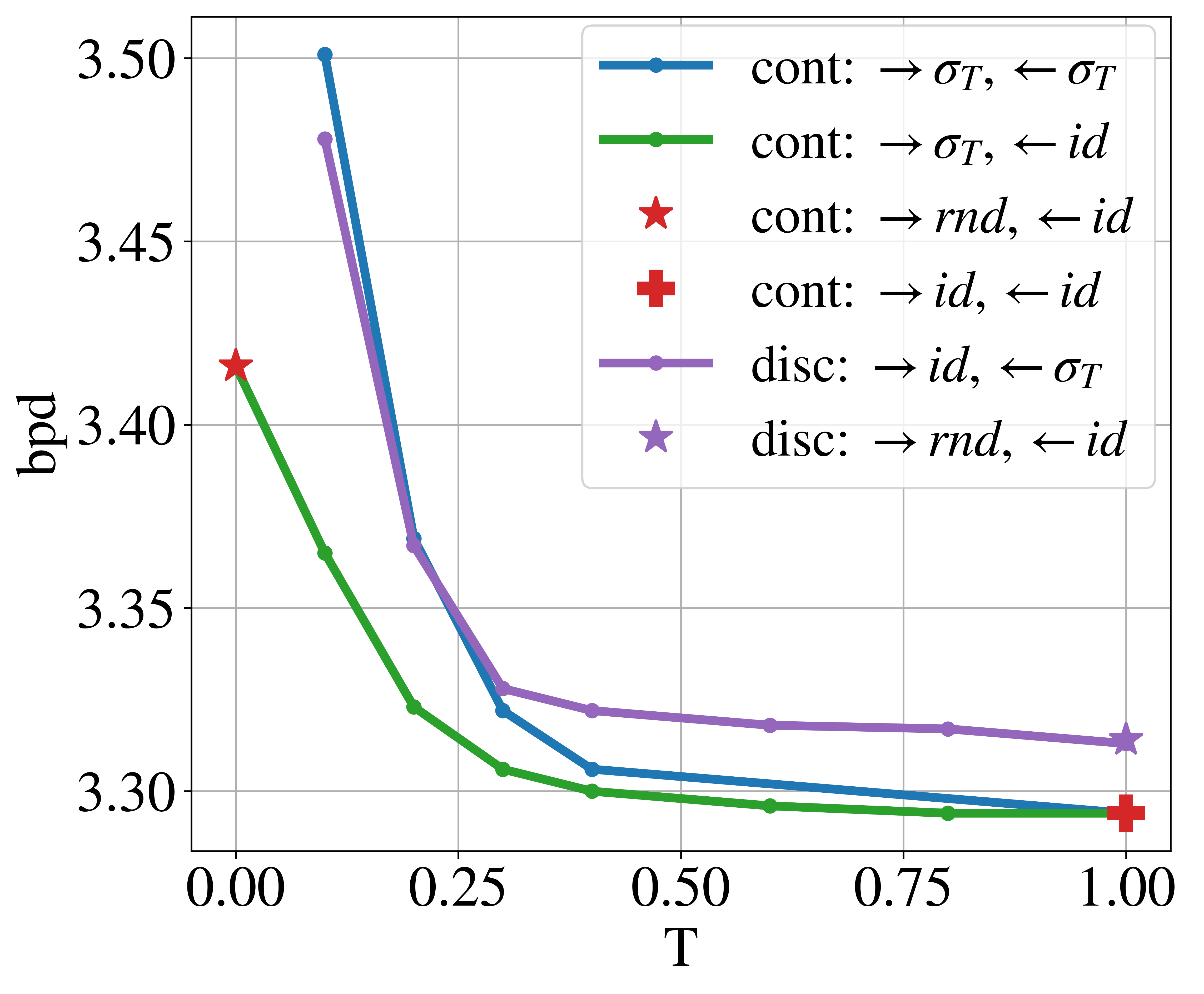}
        \includegraphics[width=0.385\columnwidth]{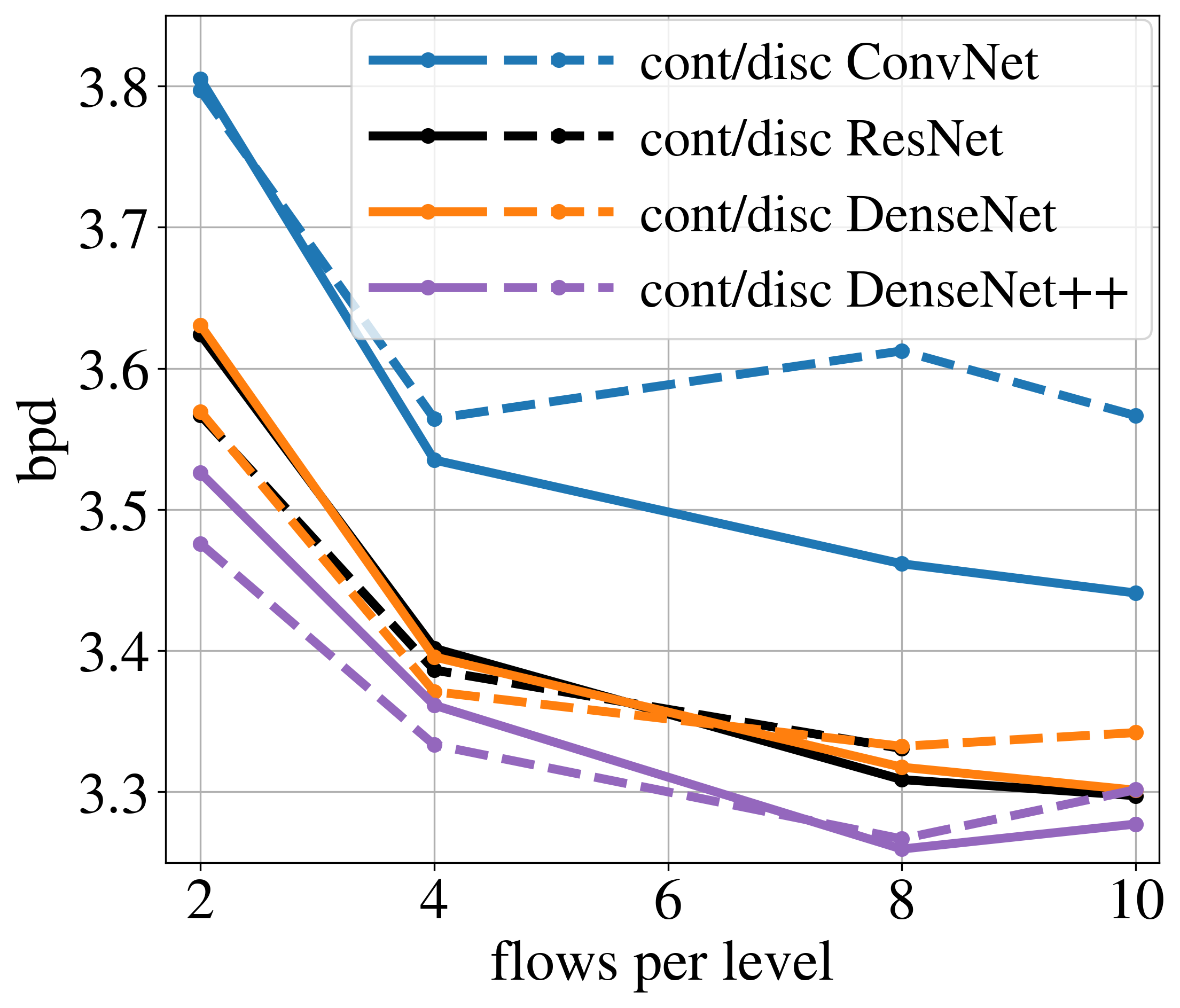} 
    \end{center}
    \vspace{-12pt}
    \caption{\textit{Left}: Bits per dimension for models trained for 2000 epochs on CIFAR-10. Continuous (cont) and discrete (disc) models are trained with different combinations of translations activation functions ($\rightarrow$) and backward ($\leftarrow$) substitute functions for the straight-through estimators: a soft rounding function ($\sigma_T$), a hard rounding function ($rnd$) and the identity function ($id$). $\sigma_T$ interpolates between the identity function at $T=0$ and the hard rounding function at $T=1$.
    \textit{Right}: Bits per dimension of IDF models trained for 300K iterations ($\pm 960$ epochs) with different architectures for the translation networks in the coupling layers: convolutional neural networks, ResNets and DenseNets. The models have 3 levels. The discrete ResNet model for 10 flows per level was unstable so the result for this model is omitted. The models with the DenseNet architecture correspond to the best models used by \citet{hoogeboom2019idf}. The ConvNets and ResNets have approximately equal number of parameters and depth as the DenseNets. Note that the ConvNets used to produce the results in Figure 5 of \citep{hoogeboom2019emerging} are much shallower. The DenseNet++ architectures correspond to the proposed IDF++ model of Section \ref{sec:experiments}.}
    \label{fig:gradbias_CIFAR-10}
\vspace{-18pt}
\end{figure}

In our experiments we found that neither stochastic rounding nor replacing the identity function in the straight-through estimator with a soft approximation of the rounding function improved the results. 
To disentangle the effect of gradient bias of the straight-through estimator and the difficulty of discrete optimization, we have trained IDFs and their continuous additive counterpart (without the rounding operator and without a quantized base distribution) on CIFAR-10. We use different combinations of translation activation functions and backward substitute functions in the straight-through estimator: soft rounding, hard rounding and the identity function. The soft rounding function $\sigma_T$ was modeled with a superposition of scaled sigmoid functions and a temperature $T$ that allows to interpolate between the identity function ($T=1$) and the rounding function $T=0$, see Appendix \ref{sec:appendix_soft_rounding} for more details.
The left plot in Figure \ref{fig:gradbias_CIFAR-10} shows that the continuous model with an identity straight-through gradient estimator consistently outperforms the continuous model without gradient bias. The discrete model clearly also does not benefit from a soft-rounding operator in the straight-through estimator, even though this estimator reduces the bias. 
Note that the continuous model trained with a hard rounding function and identity straight-through estimator ($\rightarrow rnd, \leftarrow id)$ performs worse than the discrete model because the continuous model needs to maximize a lower bound to the likelihood. 

We numerically study the significance of gradient bias in more detail on an extension of the two-dimensional toy example of Section \ref{sec:flex} to 8 bits with a model with two coupling layers. For details on the 8-bit extension of the toy example see Appendix \ref{sec:appendix_toy_example} and for architecture details see Appendix \ref{sec:appendix_toy_architecture}. We study the search directions using finite differences as an approximate gradient vector $\bs{g}^{\text{fd}}$ with elements $ \bs{g}^{\text{fd}}_{i} = (L(\theta_{i}+\epsilon,\bs \theta_{/i}) - L(\theta_{i}-\epsilon,\bs\theta_{/i}))/2\epsilon$.
Here $L$ is the loss function averaged over a single batch of data. For additive continuous flow models, $\bs{g}^{\text{fd}}$ will approach the true loss gradient $\nabla_{\theta}L$ as $\epsilon \to 0$. For discrete models, $\bs{g}^{\text{fd}}$ can be thought of as a linear approximation of the loss landscape in a trust-region of radius $\epsilon$ around the current parameter vector $\bs \theta$. 

We compare continuous flow models that are trained using the unbiased gradient $\nabla_{\bs\theta}L$ with discrete flow models that are trained using the straight-through gradient estimator $\bs{g}^{\text{st}}$. We estimate the agreement between the (approximate) gradient and the finite difference approximation $\bs{g}^{\text{fd}}$ for varying trust-region size $\epsilon$ at various stages of training and for varying bit-depth of the input data. We compute the agreement of the (approximate) gradient directions with the cosine similarity $\bs{g} \cdot \bs{g}^{\text{fd}}/||\bs{g}||_{2}  ||\bs{g}^{\text{fd}}||_{2}$, which can be interpreted as the uncentered correlation between the elements of $\bs{g}$ and $\bs{g}^{\text{fd}}$. As long as the agreement is consistently positive, performing gradient descent with $\bs{g}$ is expected to reduce the loss according to the trust-region approximation based on $\bs{g}^{\text{fd}}$. If the agreement is consistently zero, or even negative, gradient descent with $\bs{g}$ is not expected to improve the training loss. The agreements are estimated over multiple batches. As the base distribution parameters are not affected by the gradient bias we only consider the gradients for the parameters of the bijectors.

Figure~\ref{fig:gradbias_toyexample} shows the agreement between the finite-difference gradient and the straight-through gradient of the discrete model and the agreement with the real gradient for the continuous model, at initialization and after 3000 training iterations. 
The straight-through gradient estimator for 8-bit data is always positively correlated with small but finite difference estimates, corroborating our results on CIFAR-10 that the identity straight-through gradient estimator allows for good optimization. For 1-bit data the quality of the straight-through gradient estimator clearly deteriorates and the correlation becomes zero or even negative. This is not surprising as the random variables and translations are in practice modeled on the rescaled grid $\mathbb Z/2^{\mathrm{bits}}$, leading to more coarse-grained rounding for lower bits. As lossless compression mostly deals with source data in the higher bit ranges the poor performance for lower bit data is less relevant. The bottom two panels in Figure \ref{fig:gradbias_toyexample} show the loss surface and optimization trajectory of a discrete model trained on the 1-bit data, suggesting that the optimization landscape is hard to traverse for low bit-depths. See Appendix \ref{sec:appendix_toy_example} for loss landscapes for 8-bit data and for similar plots for the corresponding continuous models.

\begin{figure}[t]
\vspace{-5pt}
    \begin{center}
        \includegraphics[width=0.95\columnwidth]{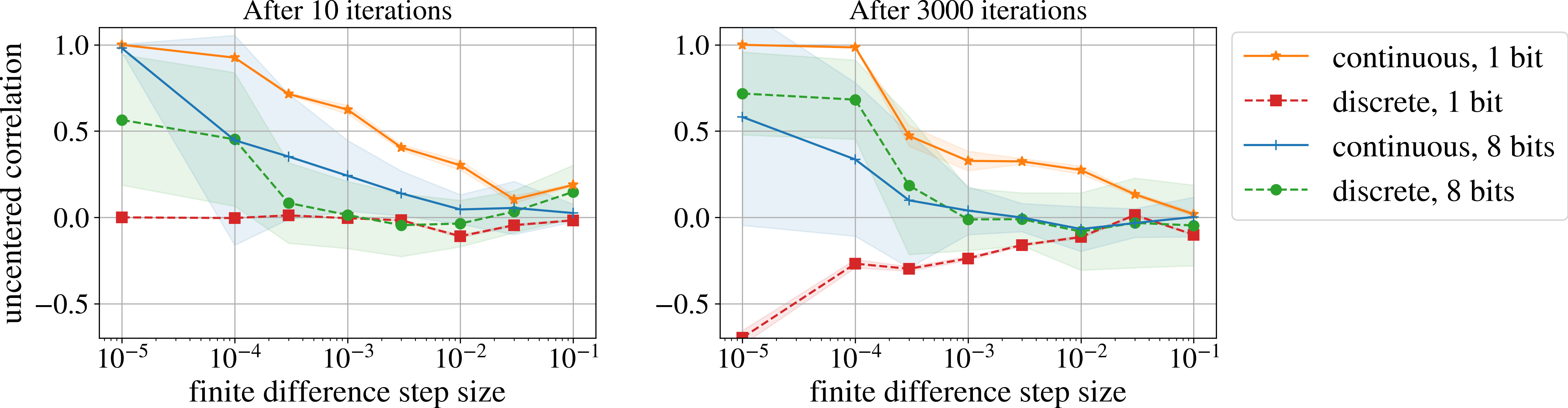} 
        \\
        \hspace{-50pt}
        \includegraphics[width=0.32\textwidth]{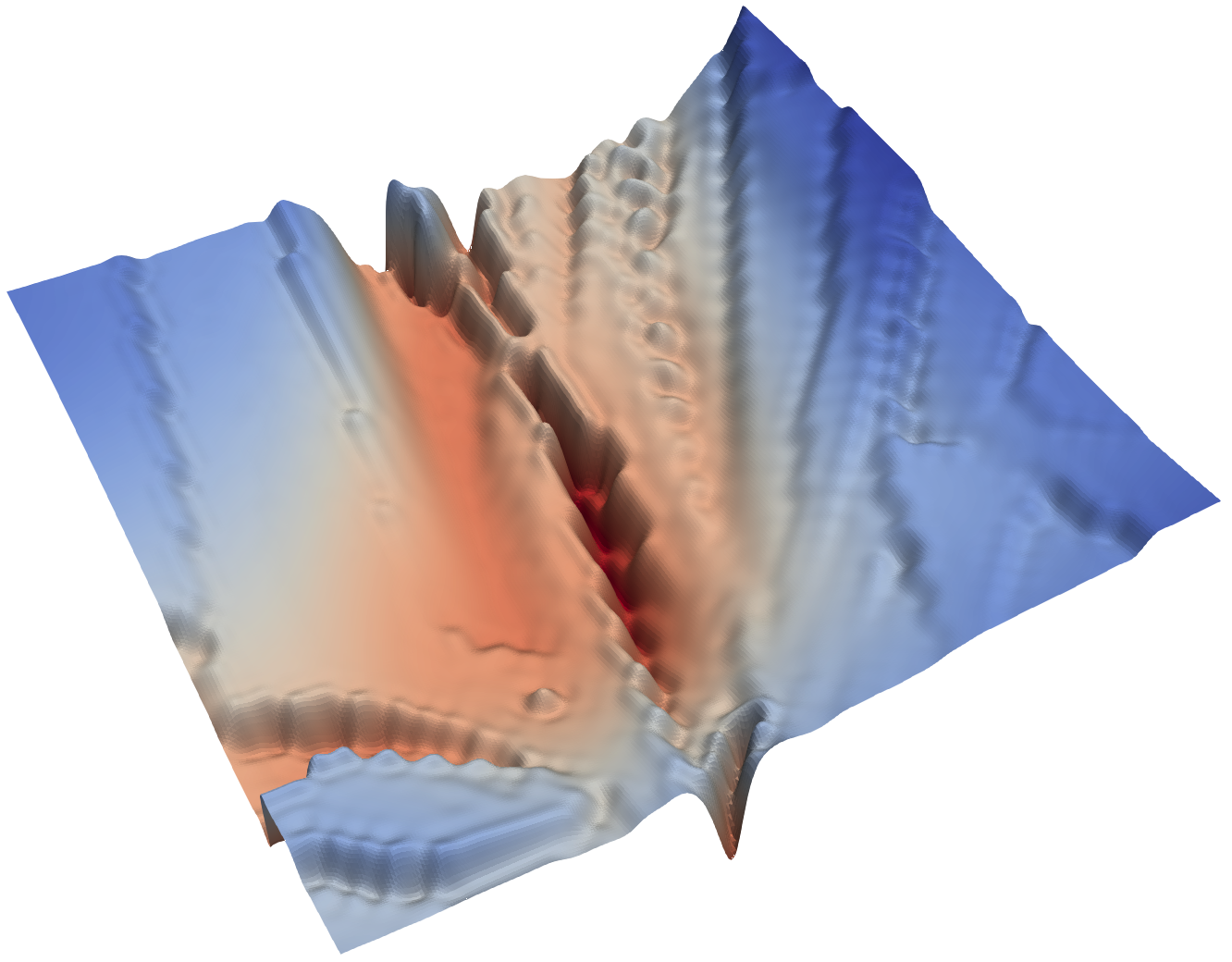}
        \quad
        \includegraphics[width=0.32\textwidth]{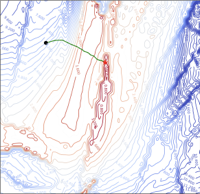}
    \end{center}
    \caption{ \textit{Top two panels}: Agreement between the approximate (true) gradient and finite difference gradient of the discrete (continuous) model at model initialization (left), and after 3000 training iterations (right). The curves were obtained by averaging over 10 batches. The higher variance for the 8-bit case is due to the larger number of different datapoints, leading to more variance in the data batches.
    \textit{Bottom two panels}: Visualization of the loss surface (left) and trajectory of the optimizer on the loss surface (right) for the discrete model on the 1-bit toy example, following~\citep{li2018visualizing}.}
    \label{fig:gradbias_toyexample}
\vspace{-10pt}
\end{figure}

Finally, we observed that the performance degradation of IDF as a function of the number of coupling layers is highly dependent on the architecture used for the translation networks in the coupling layers. The right plot in Figure \ref{fig:gradbias_CIFAR-10} shows the performance of continuous and discrete models with identity straight-through estimators, under different coupling layer architectures: stacked convolutions, ResNets \citep{he2016deep} or DenseNets \citep{huang2017dense}. All coupling layer architectures are of approximately the same size and depth, and the DenseNet models have the same architecture as the best model of \citep{hoogeboom2019emerging}. The performance of the convolutional model clearly deteriorates quicker than the ResNet and DenseNet models.

The results in this section show that gradient bias due to the straight-through estimator is less of a problem than previously suggested. Alternative gradient estimators with less bias do not improve the results, and our analysis on the toy examples shows that for data with large bit depths the straight-through gradient estimates have positive correlation with finite difference estimates. We show that the previously observed performance deterioration as a function of the number of coupling layers is highly dependent on the architecture choice of the coupling layers. This motivates our search for architectural changes that can help improve the performance of IDFs on lossless image compression.

\section{Improving Integer Discrete Flows for Lossless Compression}
\label{sec:experiments}
In this section we introduce changes to the architecture of IDF that improve its performance. 
We briefly summarize the IDF architecture, for more detailed information see Appendix \ref{sec:appendix_architecture}. 
Similar to other generative flow models, the architecture of IDF contains $L$ levels. Each level $l$ consists of a space-to-depth transformation followed by a sequence of $K$ alternating channel permutations and additive coupling layers. For levels $l = 1, ..., L-1$ the output random variable is split into two pieces, with the second part serving as an input to the next level: $[\bs z^{(l)}, \bs y^{(l)}] = f^{(l, K)}\circ ... \circ f^{(l, 1)}(\bs y^{(l-1)})$, with $\bs y^{(0)} = \bs x$. For the last level we simply have $\bs z^{(L)} = f^{(L, K)}\circ ... \circ f^{(L, 1)}(\bs y^{(L-1)})$. The combined $\bs z = [\bs z^{(1)}, ..., \bs z^{(L)}]$ denotes the latent representation of $\bs x$. The distribution of $\bs z$ for 3 levels is then factorized as $p(\bs z) = p(\bs z^{(1)}, \bs z^{(2)}, \bs z^{(3)}) =p(\bs z^{(3)}) p(\bs z^{(2)}|\bs y^{(2)}) p(\bs z^{(1)}|\bs y^{(1)})$, which is equivalent to $p(\bs z^{(3)}) p(\bs z^{(2)}|\bs z^{(3)}) p(\bs z^{(1)}|\bs z^{(2)}, \bs z^{(3)})$. The conditional distributions are modeled as discretized logistic distributions and the unconditional $p(\bs z^{(3)})$ is modeled with a mixture of discretized logistics with 5 components. Both the conditioning networks and the pre-quantized translations $\bs t_{\theta}$ in \eqref{eq:idf_bijector} are modeled with DenseNets \citep{huang2017dense}. 

The first modification is to invert the channel permutations after every coupling layer. Although inverting the permutation after each translation does not affect the coupling layers of the network, it affects the way the data is split after each level and therefore influences the modeling of the conditional distributions $p(\bs z^{(l)}|\bs y^{(l)})$. By inverting the permutations one ensures that the split into $\bs y^{(l)}$ and $\bs z^{(l)}$ happens along the channel direction of the space-to-depth transformed version of $\bs y^{(l-1)}$; as such $\bs y^{(l)}$ and $\bs z^{(l)}$ retain the spatial correlation structure of the \textit{original} image presumably making it easier to model the conditional distribution $p(\bs z^{(l)}|\bs y^{(l)})$ (see visualization in Figure~\ref{fig:appendix_idf-invert-perm} of Appendix~\ref{sec:appendix_architecture}). 

The second change is an adaptation of the rezero trick by \citet{bachlechner2020rezero}. The additive bijectors of \eqref{eq:idf_bijector} are replaced by $[\bs y_a, \bs y_b] = [\bs x_a, \bs x_b + \lfloor \alpha \bs t_{\theta}(\bs x_a) \rceil ]$, where $\alpha$ is a learnable scalar parameter initialized to zero, such that the bijectors are initialized to the identity function. The mean and log-scale of the conditional discretized logistic distributions are parameterized as $ \bs \mu = \gamma \bs \nu$, $\log \bs s =  \delta \log \bs \sigma$ with $ [\bs \nu, \log \bs \sigma]= \mathrm{DenseNet}_{\phi}(\bs y)$ and $\gamma$ and $\delta$ learnable scalar parameters initialized to zero.

The third modification consists of an alteration in the dense blocks that make up the translation and logistic conditioning DenseNets by introducing group normalization layers \citep{wu2018group} and switching from ReLU activations \citep{nair2010rectified} to Swish activations \citep{ramach2017searching}:
\begin{align}
&\text{IDF:} \quad \mathrm{Conv1x1} \rightarrow \mathrm{ReLU} \rightarrow  \mathrm{Conv3x3} \rightarrow \mathrm{ReLU} \nonumber\\ 
&\text{IDF++:} \quad \mathrm{Conv1x1} \rightarrow \mathrm{GroupNorm} \rightarrow \mathrm{Swish} \rightarrow  \mathrm{Conv3x3} \rightarrow \mathrm{GroupNorm} \rightarrow \mathrm{Swish} \nonumber
\end{align}
The trainability of deep neural networks such as flow models is strongly dependent on careful initialization and normalization. Identity mapping initialization of flow models, which we achieve by virtue of the rezero trick, is a known technique for improving training stability of these models \citep{kingma2018glow}. The use of group normalization layers ensures that the dense blocks within the coupling layers receive properly scaled and centered inputs. Combined with the Swish activation function this allows DenseNet activations to remain non-zero and the coupling layers to utilize their capacity more effectively. Empirically, these intuitions are supported by the fact that collectively these modifications allow for the use of a higher base learning rate during training while achieving better results. Finally, during training we also keep track of a slow exponential moving average of the models weights, and use these average weights during evaluation. 

Figure \ref{fig:gradbias_CIFAR-10} shows the performance of models with the proposed modifications (DenseNet++) on the validation set (consisting of 20\% of the training set) on CIFAR-10 as a function of flows per level, after 300K iterations. For this new architecture the performance of \textit{both} the continuous and the discrete version of the model deteriorates for 10 flows per level due to overfitting (see Figure \ref{fig:appendix_overfitting}).
These results also show that the modifications allow for a more efficient use of the flow layers: for 300K iterations the performance of the IDF++ model with a DenseNet++ architecture is on par with that of the IDF model of \citet{hoogeboom2019idf} with DenseNet coupling layers. 

\begin{table}[t!]
    \centering
    \caption{Compression results in bits per dimension (bpd) for IDF++, hand-designed codecs and other deep density estimators based on normalizing flows, super resolution and variational auto-encoders. Where available, the bpd according to the model's negative log-likelihood is indicated in parenthesis. Results with a $^*$ are taken from \citet{townsend2019hilloc}, and those with $^\dagger$ are taken from \citet{hoogeboom2019idf}. The IDF CIFAR-10 result indicated with $^{**}$ is obtained by our implementation of IDF with 100\% of the training data used to train the model. In \citep{hoogeboom2019idf} only 80\% of the training data was used to train the best CIFAR-10 model.
    All other results are from the original papers.}
    \begin{tabular}{llll}
        \toprule
         Compression models & \textsc{Cifar-10} & \textsc{ImageNet-32} & \textsc{ImageNet-64} \\
         \midrule
         \textsc{PNG} (\citet{boutell1997png})              & $5.87^*$          & $6.39^*$          & $5.71^*$\\
         \textsc{JPEG-2000} (\citet{rabbani2002jpeg2000})    & $5.20^{\dagger}$  & $6.48^{\dagger}$  & $5.10^{\dagger}$ \\
         \textsc{FLIF} (\citet{sneyers2016flif})            & $4.19^*$          & $4.52^*$          & $4.19^*$\\
         \midrule
          \textsc{Bit-Swap} (\citet{kingma2019bit})       & $3.82$ ($3.78$)    & $4.50$ ($4.48$)   & \quad-\\
          \textsc{HiLLoC} (\citet{townsend2019hilloc})    & $3.56$ ($3.55$)   & $4.20$ ($4.18$)    &  $3.90$ ($3.89$)\\
          \textsc{LBB} (\citet{ho2019local})              & $3.12$ ($3.12$)   & $3.88$ ($3.87$) & $3.70$ ($3.70$)\\
        \midrule
         \textsc{SReC} (\citet{cao2020lossless})      & \quad-    & \quad-    &  $4.29$\\
         \textsc{IDF} (\citet{hoogeboom2019idf})      & $3.32$  $(3.30)^{**}$ & $4.18$ ($4.15$) & $3.90$ ($3.90$)\\
         \textsc{IDF++, small: 4 flows per level}                               & $3.31$ ($3.29$) & $ 4.16$ $(4.14)$  & $3.85$ $(3.85)$\\
         \textsc{IDF++}                               & $3.26$ ($3.24$) & $4.12$ ($4.10$)  & $3.81$ ($3.81$)\\
         \bottomrule
    \end{tabular}
    \label{tab:results}
\vspace{-10pt}
\end{table}
To further corroborate this point, we have trained an IDF++ model with 4 and 8 flows per level until convergence for CIFAR-10, ImageNet-32 and ImageNet-64, and compared it with the baseline IDF model with 8 flows per level and other related work in Table \ref{tab:results}. 
To make a fair comparison against other methods like local bits-back coding (LBB) by \citet{ho2019local} we train our final models on the entire training set without holding out part of the training set as a validation set. Although there is no visible difference for the ImageNet-32 and ImageNet-64 datasets, more data does matter for CIFAR-10: it reduces the model's average negative log-likelihood in units of bits per dimension (bpd) on the test set from $3.32$ (as reported by \citet{hoogeboom2019idf}) to $3.30$ for the baseline IDF model. 
Table \ref{tab:results} displays the bpd resulting from compressing the latent representations $\bs z$ with a range-based Asymmetric Numerical Systems (rANS) entropy coder \citep{duda2013asymmetric}. Results for hand-designed codecs, models with bits-back coding and models without bits-back coding are shown in separate groups. 
Where available, the bpd as predicted from the models negative log-likelihood is indicated in brackets. 

The results show that, contrary to less-biased gradient estimators, the proposed architecture modifications in IDF++ improve the performance over IDF for 8 flows per level for all datasets. Furthermore, the smaller IDF++ model with only 4 flows per level either performs on par or better than the IDF baseline with 8 flows per level. This effectively reduces the number of parameters and time required for a forward pass through the model by a factor of 2, and demonstrates that the proposed modifications in IDF++ enable more efficient compression without sacrificing performance. 
For a more detailed ablation on the contributions of the proposed modifications see Appendix \ref{sec:appendix_ablation}.

\section{Conclusion}
In this paper we investigated several potential directions for improvement of integer discrete flows for lossless compression. We analyzed the hypothesis and claims of recent works on the weaknesses of this model class, and we showed that the claim by \citet{papamakarios2019normalizing} that flows for discrete random variables cannot factorize all distributions does not apply to integer discrete flows. Furthermore, we demonstrated that the effect of gradient bias on optimization is less severe than previously thought. Other gradient estimators with less bias do not improve optimization, and a numerical analysis of the direction of the straight-through gradient estimates showed that they correlate well with finite difference estimates for large-bit data. We also showed that the previously observed performance deterioration with increasing depth of the flow model is highly dependent on the architecture of the coupling layers. Motivated by this, we proposed several architecture changes that improve the performance of IDFs on lossless image compression for an equal computational budget. The proposed modifications lead to a more efficient flow-based compression model, as evidenced by our results that show that an IDF++ model with half the number of flows compared to the baseline IDF model performs on par or better. 

Although we found that the simple straight-through gradient estimator outperformed all of the other estimators that we considered, future work could consider other estimators inspired by universal quantization and the statistics of quantization noise, similar to the work by \citet{agustsson2020universally}.
Another important direction for future work is to further reduce the computational complexity of deep density estimators. Although hand-designed codecs such as JPEG-2000 do not compress as well on the datasets we consider, they (de)compress significantly faster and require significantly less memory while not being tuned for each dataset. 
More work in directions like optimizing for (de)compression speed~\citep{cao2020lossless} or generalizing learnable compressors to other datasets~\citep{townsend2019hilloc} is needed to make deep density estimators more practical for source compression.

\subsubsection*{Acknowledgments}
We would like to thank Nal Kalchbrenner and Patrick Forr\'e for useful discussions during the project and Lucas Theis for feedback on our manuscript. We thank Nicolas Renaud for help with creating Figure \ref{fig:flatten_dist}.

\bibliographystyle{iclr2021_conference}
\bibliography{iclr2021_conference}

\newpage

\begin{center}
 \Large{Supplementary Material of ``IDF++: Analyzing and Improving Integer Discrete Flows for Lossless Compression''}  
\end{center}

\begin{appendices}
\section{Flexibility of integer discrete flows}
\label{sec:appendix_flexibility}

The proof of Lemma \ref{lem:1} can be done in two ways, one is by giving the exact mapping, and the second way is through induction. We include both below as well as the proof for Lemma \ref{lem:2}.
\begin{proof}[Proof of Lemma \ref{lem:1}]
The invertible mapping is given by $y = f(\bs x) = x_1 + K^{(1)} x_2 + K^{(1)}K^{(2)} x_3 +\dots + \prod_{i=1}^{d-1} K^{(i)} x_d$, with a maximum value of $\left(\prod_{i=1}^{d} K^{(i)}\right)  - 1$ and minimum value of zero. Given $y$, each element $x_i$ can be obtained recursively: $x_i = \left(y - \left(\sum_{j=i+1}^d K^{j-1} x_j\right)\right)//K^{i-1}$, where $//$ refers to integer division. Through the change of variables formula we know that $p_y(y) = p_{\bs x}(f^{-1}(y))$, which combined with zero padding to embed in $d$ dimensions leads to the desired factorized distribution in $d$ dimensions. 
\end{proof}

\begin{proof}[Alternative proof of Lemma \ref{lem:1}]
An alternative proof is via induction on $d$. 

Base case: Let us start with $d=2$, we have the random variable $\bs x = (x_1, x_2)^T$ with $x_1 \in \{0, ..., K^{(1)}-1\}$ and $x_2 \in \{0, ..., K^{(2)}-1\}$. $\bs x$ is distributed according to $p_{\bs x} (x_1, x_2)$. The following function can map this random variable to a one-dimensional random variable $y$ with $y \in \{0, \dots, K^{(1)}K^{(2)}-1\}$: $y = x_1 + K^{(1)} x_2$. 
Given $y$, $x_1$ and $x_2$ can be recovered through $x_2 = y // K^{(1)}$ and $x_1 = y - K^{(1)} x_2$.

General case: Now let us assume that the lemma holds for $d=n$, and examine the case of $d=n+1$: $\bs x = (x_1, ..., x_n, x_{n+1})^T$ with $x_i \in \{0, ..., K^{(i)}\}$. Then the following transformation maps to an $n$-dimensional random variable $\bs y = (y_1, \dots y_n)^T$: $y_i= x_i$ for $i=\{1, \dots, n-1\}$ and $y_n = x_n + K^{(n)} x_{n+1}$. $\bs x$ can be recovered from $\bs y$ through $x_i= y_i$ for $i=\{1, \dots, n-1\}$ and $x_{n+1} = y_n // K$, $x_n = y_n - K x_{n+1}$. We have $y_1, ..., y_{n-1} \in \{0, ..., K^{(i)}-1\}$ and $y_n \in \{0, ..., K^{(n)}K^{(n-1)}-1\}$. As we assumed that we can transform an $n$-dimensional random variable to a one-dimensional random variable in an invertible manner, this concludes that we can also do so for the $d=n+1$ case.
\end{proof}

\begin{proof}[Proof of Lemma \ref{lem:2}]
We will prove this via induction on $d$.

Base case: Let us start again with $d=2$. $\bs x = (x_1, x_2)^T$ with $x_i \in \{0, \dots, K^{(i)}\}$ can be transformed into a random variable $\bs y = (y_1, 0)^T$ in an invertible manner with two translation operations of the form $z_a = x_a$, $z_b = x_b + \lfloor t(x_a)\rceil$, where either $(a, b) =(1, 2)$ or $(a, b) = (2, 1)$.
The first translation is given by $z_1 = x_1 + K^{(1)} x_2$, $z_2 = x_2$, followed by $y_1 = z_1 = x_1 + K^{(1)} x_2$, $y_2 = x_2 - (x_1 + K^{(1)} x_2)//K^{(1)} = x_2 - x_2 = 0$.

General case: Let us assume translations are sufficient for $d=n$. We can then transform a $d=n+1$ dimensional variable $\bs x = (x_1, \dots , x_n, x_{n+1})^T$ in an invertible manner to a random variable that is effectively $n$-dimensional: $\bs y = (y_1, \dots, y_n, 0)$ with two translations operations:
The first translation is:
\begin{align}
&z_i = x_i \text{ for } i \in \{1, \dots, n-1, n+1\} \nonumber \\
&z_n = x_n + K^{(n)} x_{n+1},   
\end{align}
followed by the second translation:
\begin{align}
&y_i = z_i \text{ for } i \in \{1, \dots, n\} \nonumber \\
&y_{n+1} = z_{n+1} - z_n//K^{(n)} = x_{n+1} - (x_n + K^{(n)} x_{n+1})//K^{(n)} = x_{n+1} - x_{n+1} = 0.
\end{align}
Which results in $y_i = x_i$ for $i \in \{1, ..., n-1\}$ and $y_n = x_n + K^{(n)} x_{n+1}$. As we assumed integer translations were sufficient to map an $n$-dimensional variable like $(y_1, \dots, y_n)^T$ to a one-dimensional random variable embedded in $n$ dimensions in an invertible manner, we have now proven that this also holds for $d=n+1.$
\end{proof}
\section{Learning toy examples}
\label{sec:appendix_toy_example}

\subsection{Gradient bias on the toy example}
The toy example that is discussed in Section~\ref{sec:flex} has a probability distribution with nonzero probabilities for $x_i \in \{0, 1\}$. With only two values per dimension with nonzero probability the data is effectively 1-bit quantized. Here we use an extension of this toy example for input data that is quantized to a higher bit-depth. This can be done by modeling the probability masses with log-linearly spaced logits in the interval $(0, 1)$. The resulting distributions for several bit-depths are shown in Figure \ref{fig:toy_example_bits}.
\begin{figure}[h]
    \centering
        \includegraphics[width=1\textwidth]{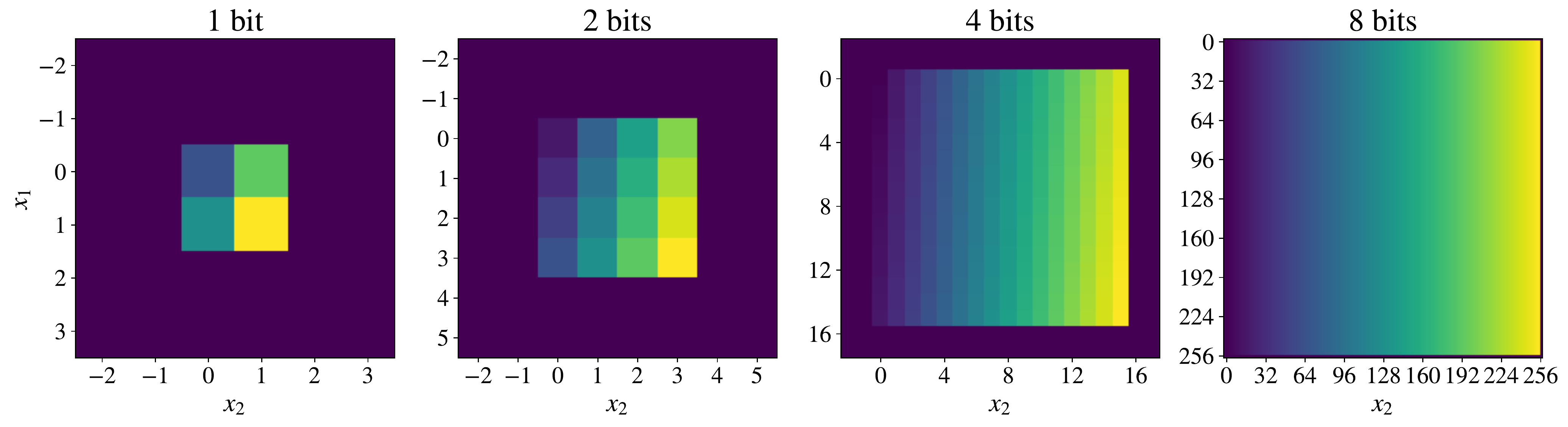}
    \caption{probability distribution of the extension of the toy example of Eq. \eqref{eq:examplepapamakarios} to more bits.}
    \label{fig:toy_example_bits}
\end{figure}

\subsection{Loss landscape and optimizer trajectory}
\label{sec:appendix_loss_landscapes}
In Section~\ref{sec:quantization}, we presented the visualization of the loss landscape and the optimization trajectory of the \emph{discrete} model on a toy example with 1-bit data. The details on how these plots were obtained will be explained in this section.
Navigating through a loss landscape that is affected by quantization operators can be challenging due to discontinuities. To illustrate this effect, we visualize the loss landscape and the optimization path for the discrete and continuous models trained on the 1-bit and 8-bit toy examples. We use the method proposed by \citet{li2018visualizing}, where for given model parameters $\bs \theta^*$, we choose two direction vectors $\bs \theta_1$ and $\bs \theta_2$ and plot the value of $f(\alpha, \beta) = \mathcal{L}(\bs \theta^*+\alpha \bs\theta_1+\beta \bs\theta_2)$. To choose the direction vectors, we use model parameters at different stages of training. Let $\bs \theta^*_i$ be model parameters after iteration $i$. Given the set of parameters for $n$ iterations, we apply PCA to the matrix $\bs M = [\bs \theta^*_0 - \bs\theta^*_n, \ldots, \bs\theta^*_{n-1} - \bs\theta^*_n]$ and select the two most explanatory directions as $\bs\theta_1$ and $\bs\theta_2$.

The results for the 1-bit toy example are shown in the left two panels of Figure \ref{fig:gradbias_toyexample} for the discrete flow model. Figure \ref{fig:cont_loss_surface} shows similar plots for the continuous case, demonstrating a much smoother loss landscape. Figure \ref{fig:gradbias_toyexample} suggests that discrete models with unlucky initializations can easily end up in a local minimum from which it is hard to escape due to sharp cliffs in the loss landscape. Figures \ref{fig:disc_loss_surface_8bits} and \ref{fig:cont_loss_surface_8bits} show a significantly less pronounced difference between the loss landscape of the continuous and discrete model for the 8-bit toy example. This supports our observation in Section~\ref{sec:quantization} that the gradient bias is less of an issue at higher bit-depths.

\begin{figure}[H]
    \centering
        \begin{subfigure}{0.45\textwidth}
        \centering
        \includegraphics[width=\textwidth]{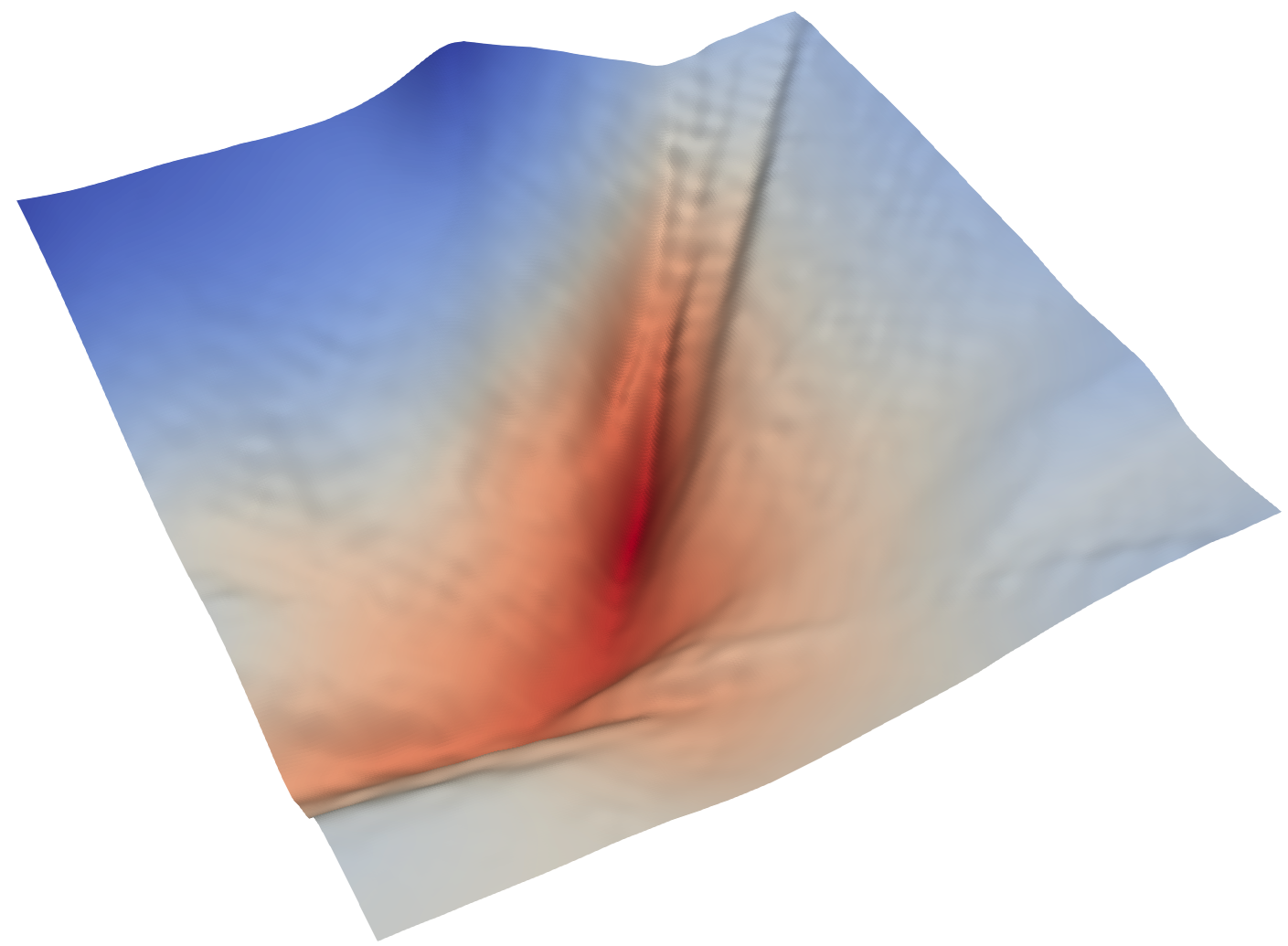}
    \end{subfigure}%
    ~
    \begin{subfigure}{0.42\textwidth}
        \centering
        \includegraphics[width=0.75\textwidth]{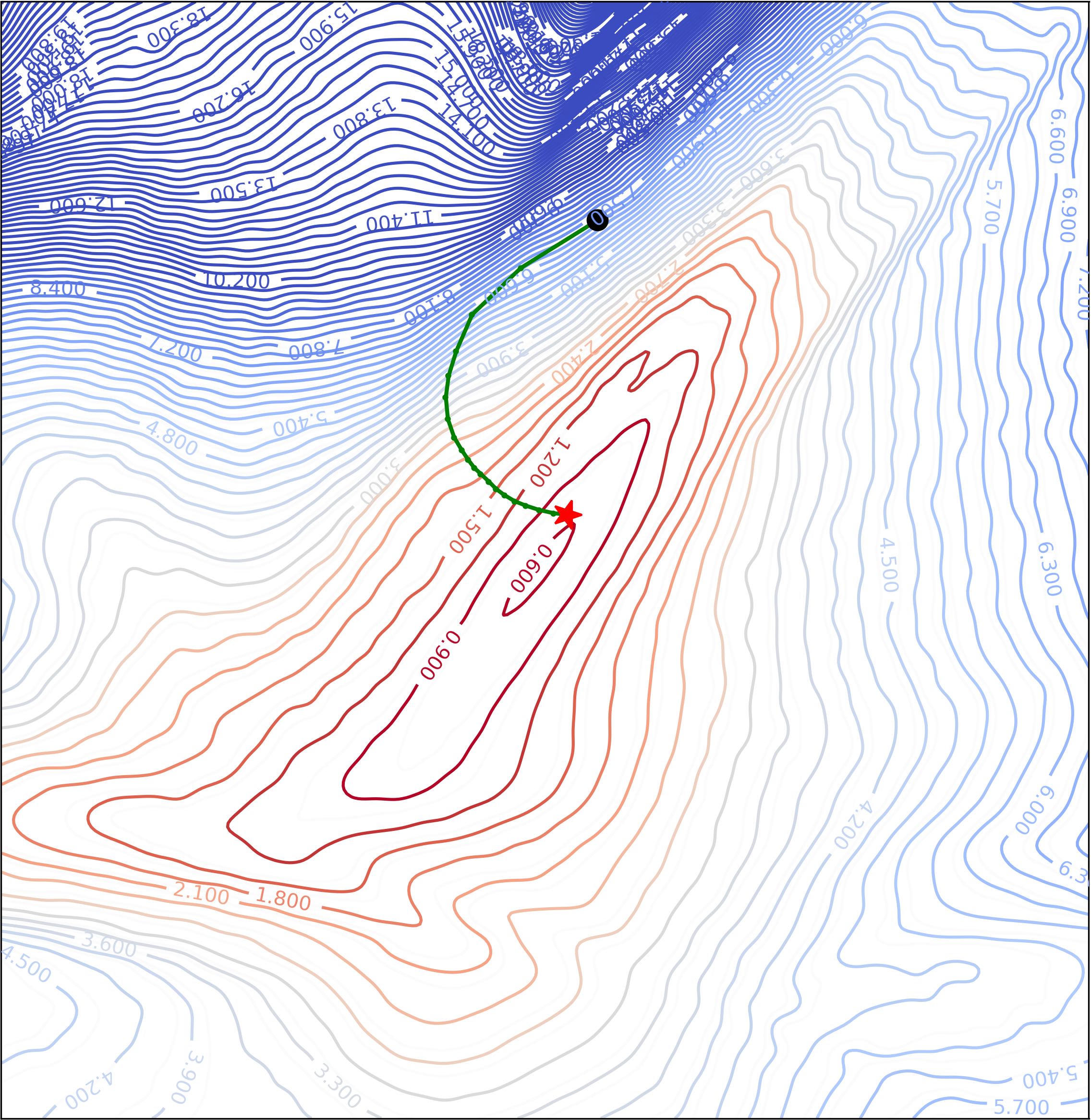}
    \end{subfigure}%
    \caption{Visualization of the trajectory of the optimization on the loss surface~\citep{li2018visualizing} for the continuous model for the toy example with 1-bit data.}
    \label{fig:cont_loss_surface}
\end{figure}
\begin{figure}[H]
    \centering
        \begin{subfigure}{0.4\textwidth}
        \centering
        \includegraphics[width=\textwidth]{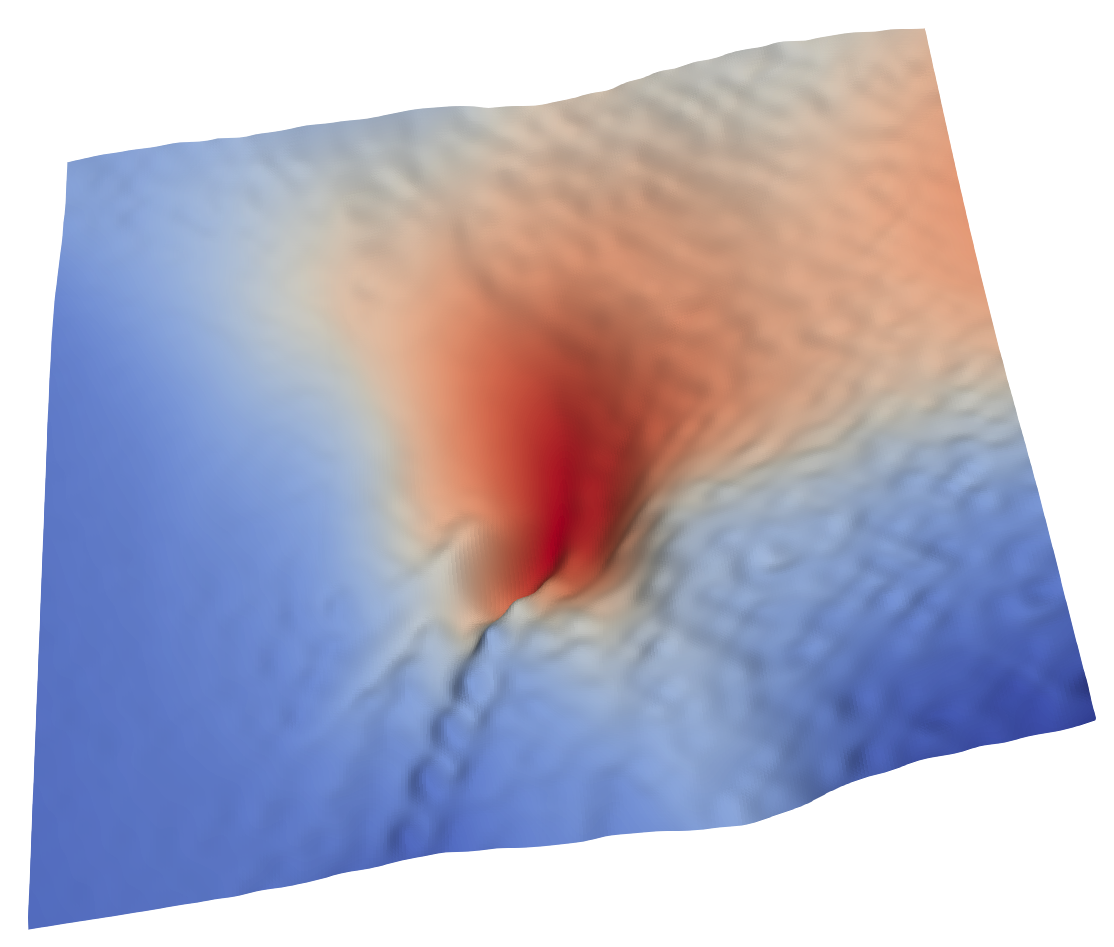}
    \end{subfigure}%
    ~
    \begin{subfigure}{0.42\textwidth}
        \centering
        \includegraphics[width=0.75\textwidth]{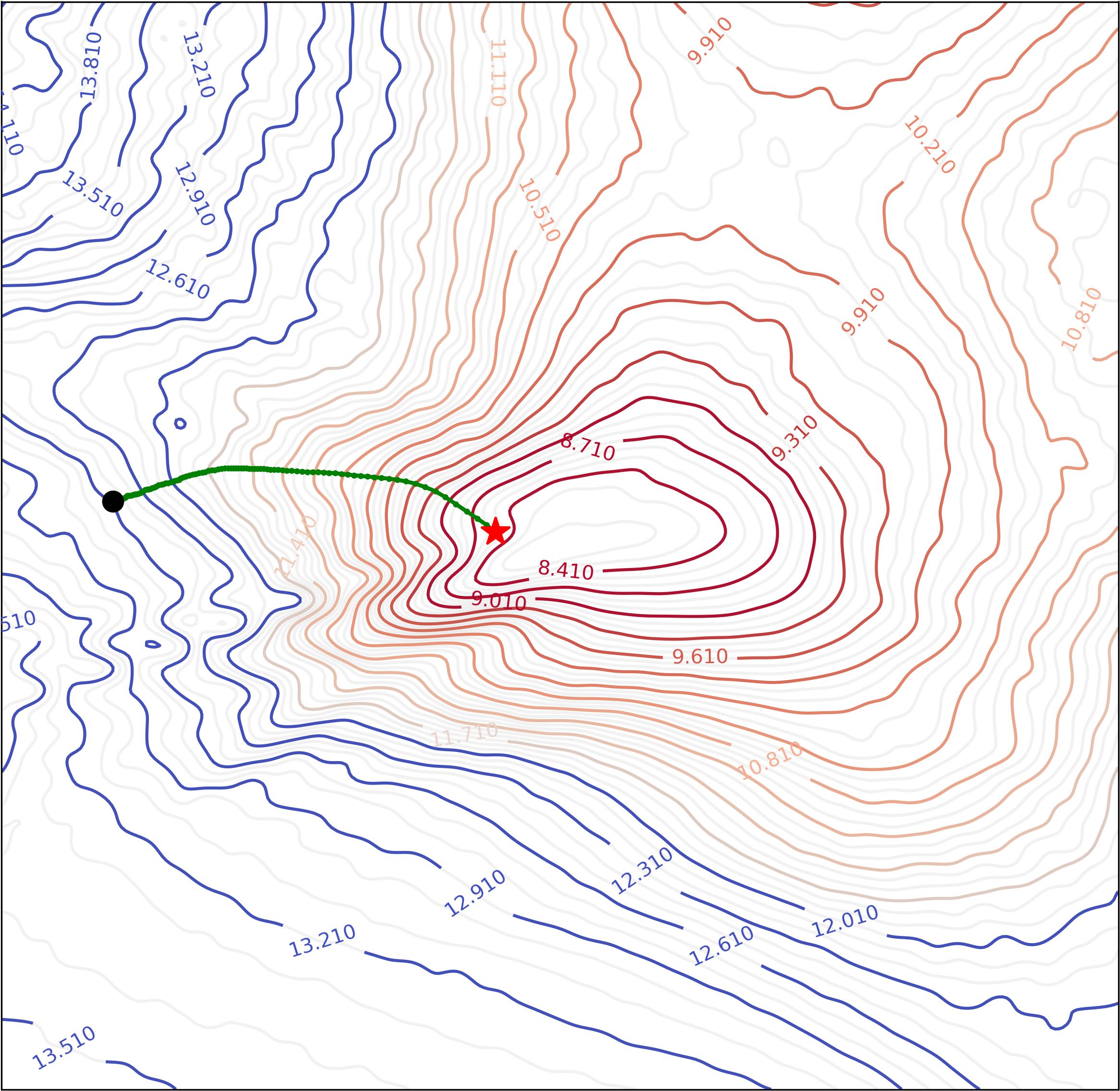}
    \end{subfigure}%
    \caption{Visualization of the trajectory of the optimization on the loss surface~\citep{li2018visualizing} for the discrete model for the toy example with 8-bit data.}
    \label{fig:disc_loss_surface_8bits}
\end{figure}
\begin{figure}[H]
    \centering
        \begin{subfigure}{0.4\textwidth}
        \centering
        \includegraphics[width=\textwidth]{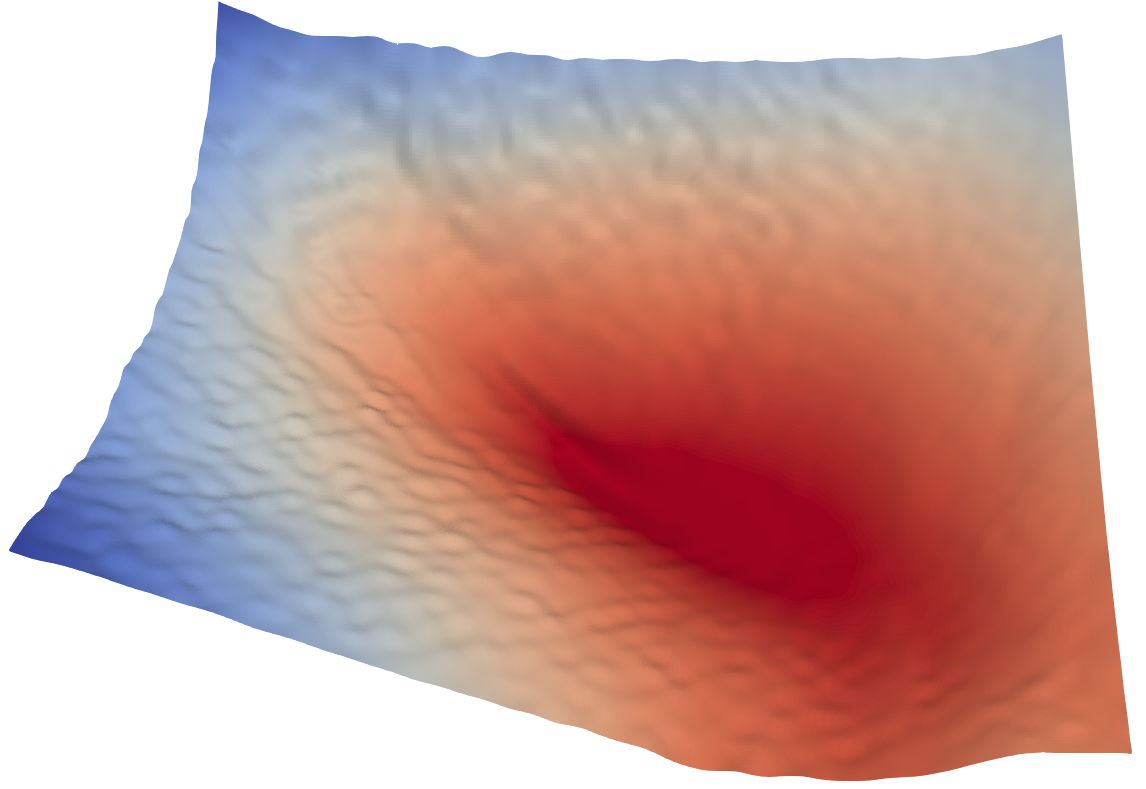}
    \end{subfigure}%
    ~
    \begin{subfigure}{0.42\textwidth}
        \centering
        \includegraphics[width=0.75\textwidth]{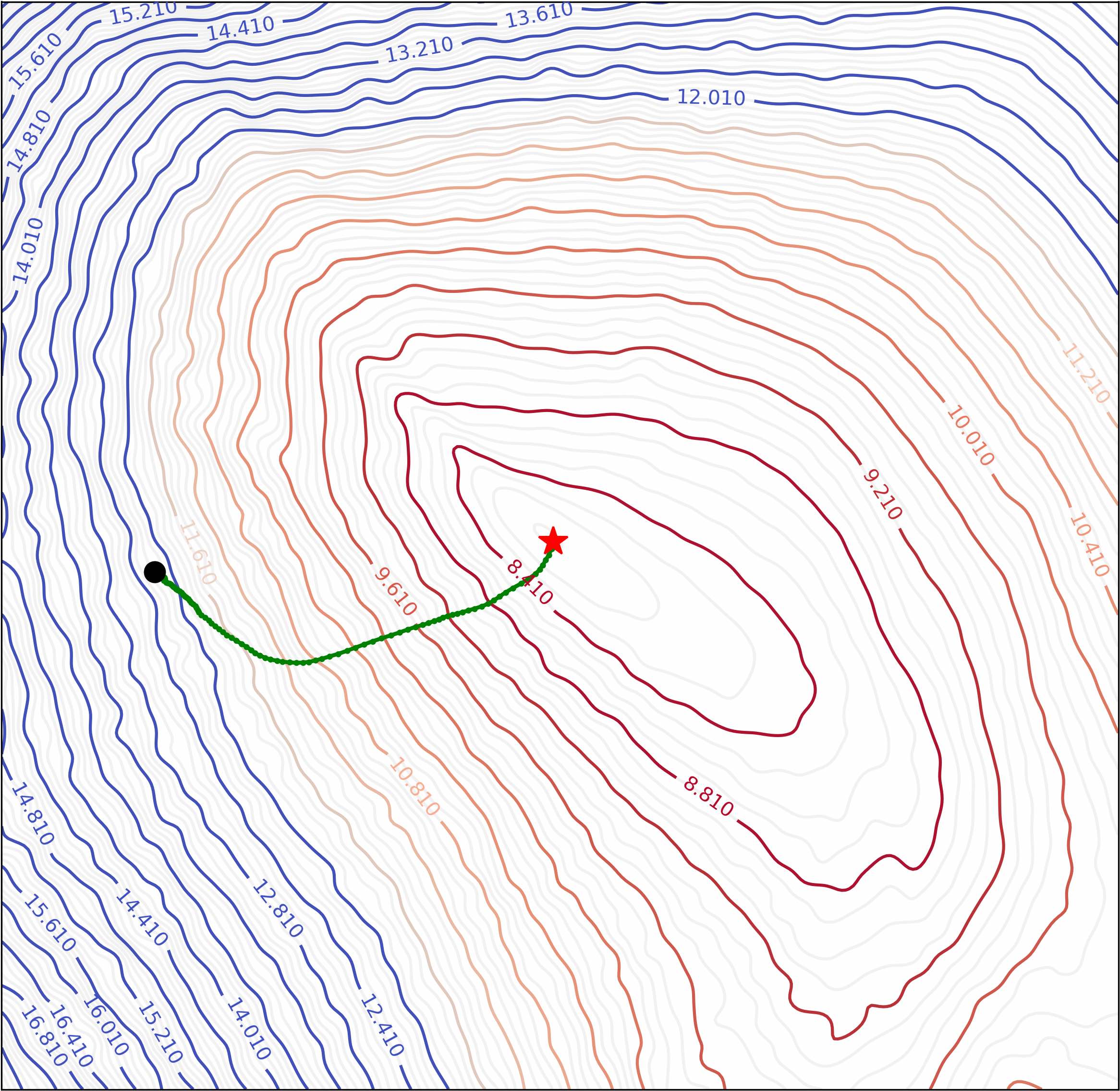}
    \end{subfigure}%
    \caption{Visualization of the trajectory of the optimization on the loss surface~\citep{li2018visualizing} for the continuous model for the toy example with 8-bit data.}
    \label{fig:cont_loss_surface_8bits}
\end{figure}

\section{Soft rounding function}
\label{sec:appendix_soft_rounding}
The soft rounding function $\sigma_T$ that is used in Section \ref{sec:quantization} is given by
\begin{align}
    \sigma_T(x) = \lfloor x \rfloor + \frac{\sigma\left(\frac{2}{T} (x- \lfloor x \rfloor)- \frac{1}{T}\right)}{\sigma\left(\frac{1}{T}\right)- \sigma\left(-\frac{1}{T}\right)} \;.
\end{align}
The soft rounding function is depicted in Figure \ref{fig:soft_rounding} for different temperatures. 
\begin{figure}[ht]
    \centering
        \includegraphics[width=0.5\textwidth]{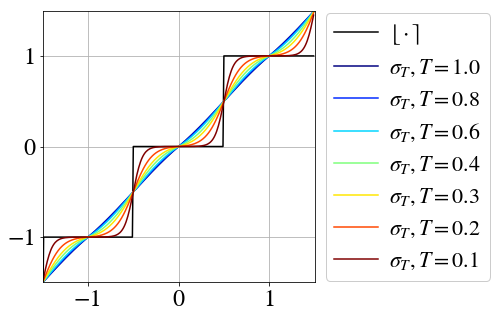}
    \caption{Soft rounding function $\sigma_T$ for different temperatures $T$. In the limit $T=0$ the soft rounding function reduces to the hard rounding function $\lfloor \cdot \rceil$. At $T=1$ the soft rounding function is indistinguishable from the identity function.}
    \label{fig:soft_rounding}
\end{figure}
\section{Architectures}
\label{sec:appendix_architecture}

\subsection{Architecture and training details of IDF}
The architecture of IDF consists of $L$ levels, each consisting of an alternating sequence of channel permutations and additive bijectors. More specifically, the architecture of level $l$ of IDF is built up as follows:
\begin{align}
&\bs y^{(l-1)}\rightarrow \text{space-to-depth} \rightarrow \overbrace{\{\text{Permute} \rightarrow \text{Additive transform}\}}^{\times K} \rightarrow [\bs z^{(l)}, \bs y^{(l)}].  \nonumber 
\end{align}
Except for the last level, the output random variable is split into two equal halves $[\bs z^{(l)}, \bs y^{(l)}]$, where only $\bs y^{(l)}$ is transformed by the next level. Furthermore, $\bs y^{(0)} = \bs x$ and $\bs z = [\bs z^{(1)}, \ldots, \bs z^{(L)}]$ denotes the latent representation of $\bs x$. 
Inside the additive transformations of \eqref{eq:idf_bijector} the prequantized translations are modeled using DenseNets \citep{huang2017dense} with dense blocks of the following structure:
\begin{align}
&\text{Dense block:} \quad \mathrm{Conv1x1} \rightarrow \mathrm{ReLU} \rightarrow  \mathrm{Conv3x3} \rightarrow \mathrm{ReLU} \nonumber. 
\end{align}
All DenseNets have a depth of $12$ blocks and $512$ channels.
The additive bijector splits the random variable into two parts along the channel dimension with splitting fractions $3/4$ and $1/4$ for the untransformed and transformed parts of the random variable. After translating, the resulting variables are concatenated again along the channel axis.

The distribution of $\bs z$ for $L$ levels is factorized as $p(\bs z) =p(\bs z^{(L)}) p(\bs z^{(L-1)}|\bs y^{(L-1)}) \ldots p(\bs z^{(1)}|\bs y^{(1)})$, which is equivalent to $p(\bs z^{(L)}) p(\bs z^{(L-1)}|\bs z^{(L)}) \ldots p(\bs z^{(1)}|\bs z^{(2)}, \ldots \bs z^{(L)})$. All conditional distributions are modeled with discretized logistic distributions. The unconditional distribution $p(\bs z^{(L)})$ is modeled with a mixture of discretized logistics with five components. The log-scale and mean of the conditional logistic distributions are modeled as the outputs of DenseNets with the same structure as the DenseNets of the prequantized translations: $ [\bs \nu, \log \bs \sigma]= \mathrm{DenseNet}_{\phi}(\bs y)$. Note that instead of modeling the random variables as integers ($\bs x \in \mathbb Z^d$), they are modeled as discrete random variables on a grid with bin-width $1/256$ ($\bs x \in \mathbb Z^d/256$).

The model is trained with the Adamax optimizer \citep{kingma2014adam} with an exponential learning rate schedule with base learning rate equal to $1 \times 10^{-3}$ and a linear warmup phase of 10 epochs. See Table \ref{tab:idf_architecture} for more details on the learning rate decay, the number of levels, the batch size and the number of epochs used for training.
\begin{table}[h]
    \begin{center}
        \caption{Architecture and training settings for IDF. Table adapted from \citet{hoogeboom2019idf}. Note that we used a batch size of 128 for CIFAR-10 instead of 256 as used in the original work, while still reproducing the same number as reported by \citet{hoogeboom2019idf}. For ImageNet-32 and ImageNet-64 the batch sizes are the same as in the original work \citep{hoogeboom2019idf}.}
        \begin{tabular}{lllll} 
            \toprule
             Dataset & Levels $L$ & Batch size & Learning rate decay & Epochs \\
             \midrule
             CIFAR-10   & 3 & 128   & 0.999 & 2000  \\
             ImageNet-32 & 3 & 256   & 0.99  &  100  \\
             ImageNet-64 & 4 &  64   & 0.99  &   10  \\
             \bottomrule
             \end{tabular}
        \label{tab:idf_architecture}
    \end{center}
\end{table}

Range-based Asymmetric Numerical Systems (rANS) \citep{duda2009asymmetric, duda2013asymmetric, townsend2020tutorial} is used for lossless compression of the latent variables $\bs z = [\bs z^{(1)}, \ldots, \bs z^{(L)}]$ by using the probability distribution $p(\bs z) =p(\bs z^{(L)}) p(\bs z^{(L-1)}|\bs y^{(L-1)}) \ldots p(\bs z^{(1)}|\bs y^{(1)})$ corresponding to the model's multi-level structure.

\subsection{Architecture and training details of IDF++}
In IDF++, each of the $K$ blocks containing a permutation and additive bijector has an additional inverse channel permutation to ensure that the output random variable has the same spatial and channel ordering as the input random variable:
\begin{align}
&\bs y^{(l-1)}\rightarrow \text{space-to-depth} \rightarrow \overbrace{\{\text{Permute}\rightarrow \text{Additive transform} \rightarrow \text{Inverse permute}\}}^{\times K} \rightarrow [\bs z^{(l)}, \bs y^{(l)}]  . \nonumber
\end{align}
The inversion of each permutation inside a level ensures that the splitting at the end of each level is done along the channel dimension of a space-to-depth transformed image, enabling the conditional distributions to be able to use the spatial correlation between pixels. See Figure~\ref{fig:appendix_idf-invert-perm} for a visualization. 
The dense blocks of the DenseNets for the prequantized translations and the conditional discretized logistics have additional group normalization layers \citep{wu2018group} and Swish activations \citep{ramach2017searching} instead of ReLU activations \citep{nair2010rectified}:
\begin{align}
&\text{Dense block:} \quad \mathrm{Conv1x1} \rightarrow \mathrm{GroupNorm} \rightarrow \mathrm{Swish} \rightarrow  \mathrm{Conv3x3} \rightarrow \mathrm{GroupNorm} \rightarrow \mathrm{Swish} \nonumber.
\end{align}
The number of groups for each group normalization layer are determined as follows: if the number of channels of the group normalization layer is divisible by 3 then 3 groups are used, else if it is divisible by 2 then 2 groups are used, and finally if it is neither divisible by 3 or 2 then a single group is used.

\begin{figure}[b!]
    \centering
        \includegraphics[width=0.5\textwidth]{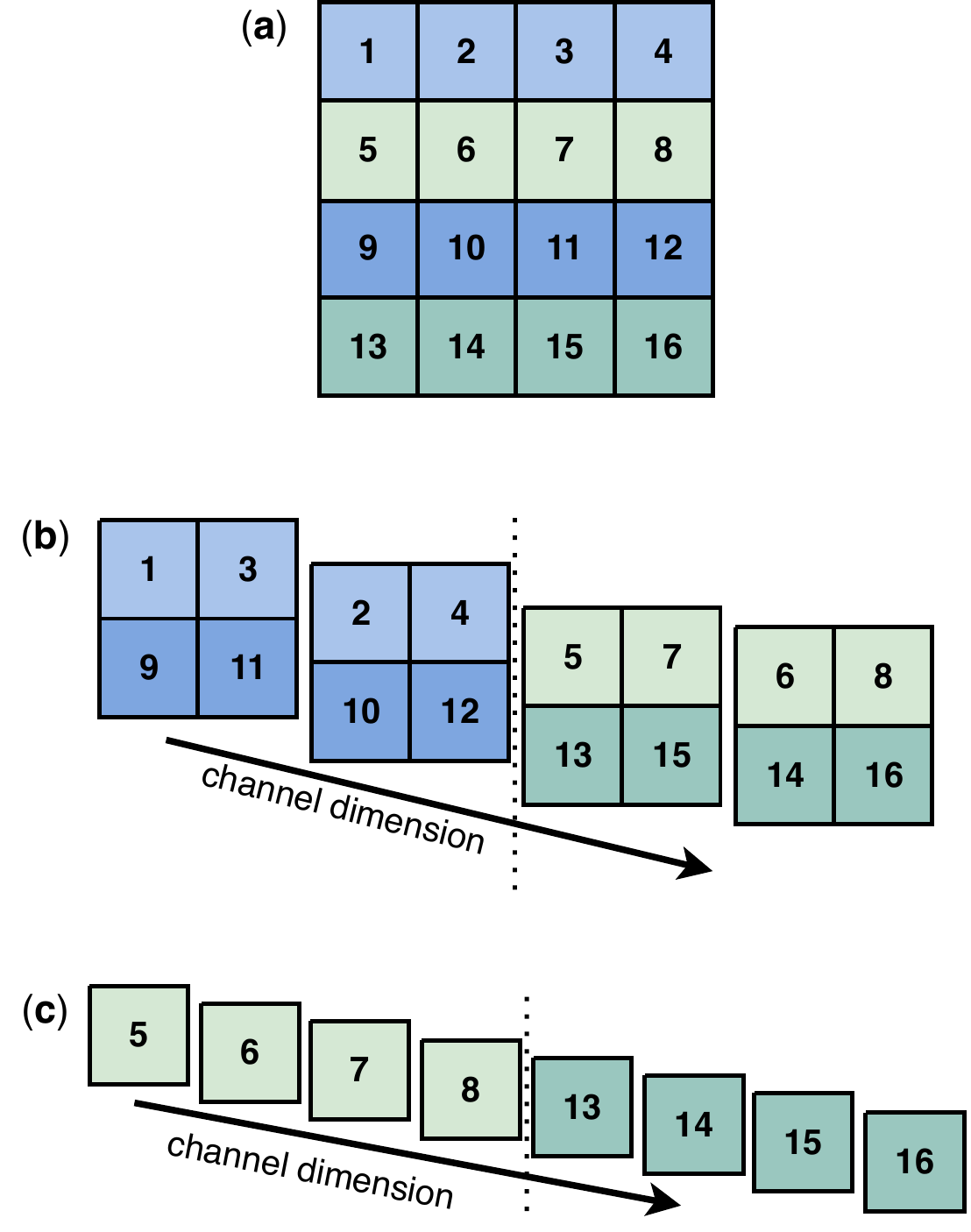}
    \caption{Schematic visualization of the effect of inverse permutations on the IDF++ architecture. In a multi-level IDF++ model the original image (\textbf{a}) undergoes a space-to-depth transformation (\textbf{b}), after which at the end of the first level half of the dimensions (blue tones) are factored out and their conditional distribution based on the remaining dimension (green tones) is learned. This procedure is repeated for the remaining dimensions (\textbf{c}) in the next level where again half the dimensions (light green) are factored out and their conditional distribution is modelled based on the remaining dimensions (dark green). By inverting permutations introduced in the coupling layers of each level we guarantee that the spatial structure of the original image is preserved at all levels. Specifically, with this structure, the factored out and remaining dimensions correspond to nearby rows of the original image. We expect that the spatial correlation present in nearby rows is effectively leveraged by the conditional distributions parameterized with convolutions, leading to better results.}
    \label{fig:appendix_idf-invert-perm}
\end{figure}

The additive bijectors in \eqref{eq:idf_bijector} are adjusted to include a learnable scalar parameter that ensures initialization to the identity operator, similar to the rezero trick by \citet{bachlechner2020rezero}:
\begin{align}
    \begin{bmatrix}
    \bs y_a \\
    \bs y_b 
    \end{bmatrix}
    = 
    \begin{bmatrix}
    \bs x_a \\
    \bs x_b + \lfloor \bs t_{\bs \theta}(\alpha \bs x_a) \rceil 
    \end{bmatrix} \;.
    \label{eq:idf++_bijector}
\end{align}
Here $\alpha$ is a learnable scalar parameter initialized to zero. 
The mean and log-scale of the conditional discretized logistic distributions are parameterized as $ \bs \mu = \gamma \bs \nu$, $\log \bs s =  \delta \log \bs \sigma$ with $ [\bs \nu, \log \bs \sigma]= \mathrm{DenseNet}_{\theta}(\bs y)$. $\gamma$ and $\delta$ are learnable scalar parameters initialized to zero, such that the scale is initialized to one and the mean is initialized to zero. 

The combination of the rezero trick and the group normalization layers allows us to use a larger base learning rate of $2\times 10^{-3}$ in the exponential decayed learning rate schedule. For CIFAR-10, the IDF++ model with 8 flows per level was trained for 1400 epochs instead of 2000 epochs as overfitting was observed for more epochs. We evaluated all variations of IDF with an exponential moving average of the trainable parameters with a decay rate of $0.9999$.

\subsection{Architecture for toy example models}
\label{sec:appendix_toy_architecture}
The models for the toy examples in Section \ref{sec:flex} and \ref{sec:quantization} have a single level with an unconditional prior given by a mixture of logistics with five components. The models have two coupling layers with DenseNets as translation neural networks. The dense blocks have the following structure
\begin{align}
&\text{Dense block:} \quad \mathrm{Conv1x1} \rightarrow \mathrm{LayerNorm} \rightarrow \mathrm{ReLU} \rightarrow  \mathrm{Conv1x1} \rightarrow \mathrm{LayerNorm}\rightarrow \mathrm{ReLU} \nonumber. 
\end{align}
We found the inclusion of LayerNorm \citep{ba2016layer} to help make the model less sensitive to initialization. Because the DenseNets in these models are fairly shallow we did not see a performance difference between LayerNorm and GroupNorm. 
The DenseNets have a depth of $4$ blocks and $32$ channels.
The datapoints are treated as images of shape $h \times w \times c = 1\times1\times2$.
The additive bijectors split the random variable into two parts along the channel dimension. After translating, the resulting variables are concatenated again along the channel axis. In between the coupling layers the channels are reversed to allow for alternating conditioning.
The learning rates for the prior parameters and coupling layers are set to $1 \times 10^{-3}$ and $1 \times 10^{-4}$ respectively and we used a batchsize of 128.

\subsection{Datasets}
The training set of CIFAR-10 consists of 50000 images and the test set contains 10000 images. ImageNet-32 and ImageNet-64 contain approximately 1250000 train images and 50000 test images. 
While we used a validation set that was held out from the respective training sets for model development, we trained our final models on the entire training set for each dataset. The effect of this is discussed in Section \ref{sec:appendix_ablation}. We follow \citet{hoogeboom2019idf} and augment the CIFAR-10 dataset with horizontal flipping, reflect-padding and random cropping during training. No augmentation is used for ImageNet-32 and ImageNet-64.

\subsection{Hardware and software}
We implemented our models in TensorFlow \citep{tensorflow2015-whitepaper}. All experiments were run with 8 NVIDIA V100 GPUs. 

\section{Ablation studies}
\label{sec:appendix_ablation}

\begin{table}[b!]
    \begin{center}
        \caption{Ablation study of the modifications of IDF for the CIFAR-10 dataset. The second column indicates the percentage of the dataset's original train set that was used to train the model. EMA = exponential moving average. The reported numbers are theoretically achievable bits per dimension.}
        \begin{tabular}{lll} 
            \toprule
             IDF variations & \% Train data & \textsc{Cifar-10} \\
             \midrule
             IDF                & 80\%             & $3.322$ \\
             IDF          & 100\%           &  $3.298$\\
             IDF + EMA          & 100\%            &$3.291$ \\
             IDF + EMA + rezero + lr $\times \;2$ & 100\% &  $3.262$ \\
             IDF + EMA + rezero + lr $\times \;2$ + invert perm &100\% & $3.255$ \\
            IDF + EMA + rezero + lr $\times \;2$ + invert perm + groupnorm (\textbf{IDF++}) & 100\%& $3.241$ \\
             \bottomrule
             \end{tabular}
        \label{tab:ablation}
    \end{center}
    \vspace{-10pt}
\end{table}

Table \ref{tab:ablation} shows the contributions of the modifications proposed in Section \ref{sec:experiments} for IDF++. 
As \citet{hoogeboom2019idf} only use 80\% of the training data for CIFAR-10, we also show the influence of using the entire train set for training, as opposed to keeping 20\% held out as a validation set. Note that the result for the original IDF model on 80\% of the training data reproduces the result of \citet{hoogeboom2019idf}. The held out validation set for ImageNet-32 and ImageNet-64 was much smaller and we noticed no significant change in performance when training on the entire training set for these datasets. All models without group normalization were trained for 2000 epochs. The final IDF++ model, which includes group normalization layers, was trained for 1400 epochs to avoid overfitting.

In Figure \ref{fig:appendix_overfitting} the validation bpd is shown for IDF++ models and their continuous counterparts for 8 and 10 flows per level. The plot shows that both the continuous and the discrete model with 10 flows per level start to overfit. 

\begin{figure}[t!]
    \centering
        \includegraphics[width=0.5\textwidth]{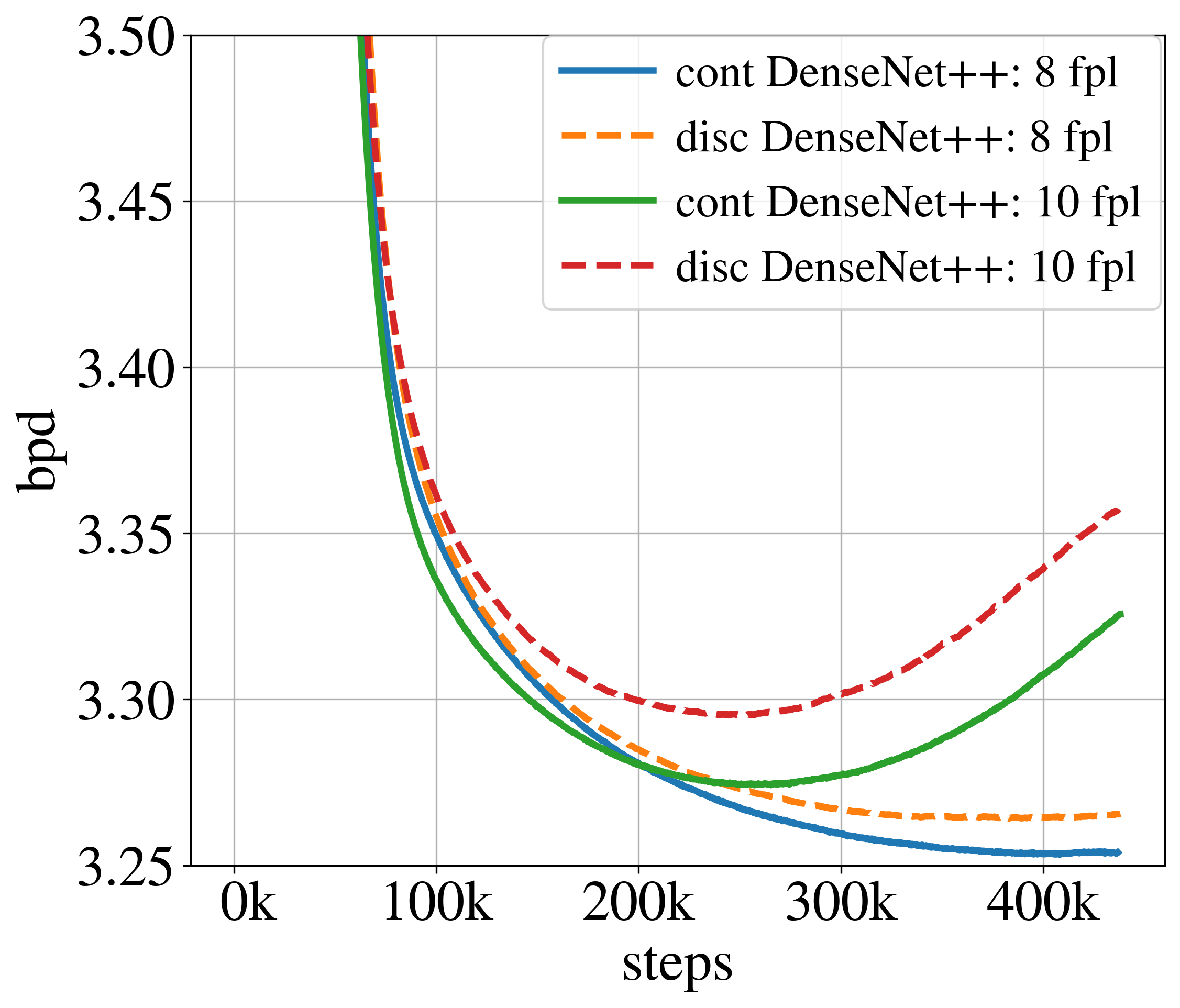}
    \caption{The performance of IDF++ models and their continuous counterparts on the validation set for 8 and 10 flows per level (fpl). The models are trained on 80\% of the training data.}
    \label{fig:appendix_overfitting}
\end{figure}

\end{appendices}

\end{document}